\documentclass[runningheads]{llncs}
\usepackage[T1]{fontenc}
\usepackage{graphicx}
\usepackage{times}  % DO NOT CHANGE THIS
\usepackage{helvet}  % DO NOT CHANGE THIS
\usepackage{courier}  % DO NOT CHANGE THIS
\usepackage[hyphens]{url}  % DO NOT CHANGE THIS

\usepackage{natbib}  % DO NOT CHANGE THIS AND DO NOT ADD ANY OPTIONS TO IT
\usepackage{caption} % DO NOT CHANGE THIS AND DO NOT ADD ANY OPTIONS TO IT
\usepackage{algpseudocode}
\usepackage{algorithm}
\usepackage{amssymb}
\usepackage{amsmath}
\usepackage{bm}
\usepackage{booktabs}
\usepackage{cuted}
\usepackage{listings}
\usepackage{mathtools}
\usepackage{multirow}
\usepackage{newfloat}
\usepackage{subcaption}
\usepackage{threeparttable}
\usepackage{balance}
\usepackage{textcomp}
\usepackage{pgfplots}
\usepackage{tikz}
\pgfplotsset{compat=1.18}
\usepackage{xcolor}
\usepackage{csquotes}
\usepackage{hyperref} % REMOVED BEFORE SUBMISSION
\usepackage[nameinlink, capitalize, noabbrev]{cleveref}

\DeclareCaptionStyle{ruled}{labelfont=normalfont, labelsep=colon, strut=off} % DO NOT CHANGE THIS
\floatstyle{ruled}
\newfloat{listing}{tb}{lst}{}
\floatname{listing}{Listing}

\usetikzlibrary{arrows.meta}
\tikzset{
    node/.style={draw, thick, circle, minimum size=0.75cm, font=\tiny},
    edge/.style={thick, -Stealth}
}

\newcommand{\eqperiod}{\text{.}}
\newcommand{\eqcomma}{\text{,}}
\newcommand{\prob}{\mathbb{P}}
\newcommand{\paset}{\mathcal{P}}
\newcommand{\init}{\mathrm{init}}
\DeclareMathOperator*{\argmin}{argmin}
\DeclareMathOperator*{\argmax}{argmax}
\DeclareMathOperator{\subjto}{s.t.}

\begin{document}
\title{Structure-Aware Robust Counterfactual Explanations via Conditional Gaussian Network Classifiers}
\titlerunning{Structure-Aware Robust Counterfactual Explanations}
% If the paper title is too long for the running head, you can set
% an abbreviated paper title here
%
\author{Zhan-Yi Liao\inst{1}  % \orcidID{0000-1111-2222-3333} 
 \and
Jaewon Yoo\inst{2} \and
Hao-Tsung Yang\inst{3} \and Po-An Chen\inst{1}}
\authorrunning{Z.-Y. Liao et al.}
% First names are abbreviated in the running head.
% If there are more than two authors, 'et al.' is used.
%
\institute{National Yang Ming Chiao Tung University, Hsinchu, Taiwan\\ \email{\{zyliao.mg12, poanchen\}@nycu.edu.tw} \and
National Tsing Hua University, Hsinchu, Taiwan \\
\email{jaewon.yoo@iss.nthu.edu.tw} \and
National Central University, Taoyuan, Taiwan \\
\email{haotsungyang@gmail.com}}
\maketitle % typeset the header of the contribution
%
%\textcolor{red}{(Please highlight all revisions and edits in red.)}

% 為何貝氏分類器優於黑箱
% 為何 CGNC 較 interpretable（例子）
% 為何離散 CE 不可行、必須連續化

\begin{abstract}

% Counterfactual explanation (CE), a core technique in explainable AI (XAI), is widely used to interpret model decisions and suggest actionable alternatives. This paper proposes a structure-aware and robust counterfactual search method based on the Conditional Gaussian Network Classifier (CGNC). By modeling conditional dependencies among features via a graphical structure, CGNC naturally embeds variable relationships and generative logic into the search process, eliminating the need for additional feasibility constraints. Unlike most black-box approaches that overlook feature dependencies and data distributions, our method integrates these properties with perturbation robustness into the optimization process, ensuring consistency with the model's structure of feature-dependencies. We employ a cutting-set algorithm to iteratively solve a master-adversary optimization problem, which terminates in a finite number of iterations. Starting from a nonconvex quadratic program introduced by the model, a piecewise relaxation can be applied to handle second-order terms, enabling reformulation into a mixed-integer linear program (MILP). Experiments show that even a global optimization of the original nonconvex quadratic problem yields stable and efficient results. The framework is extensible to more complex constraint settings and future work on other nonconvex quadratic programming designs.

Counterfactual explanation (CE) is a core technique in explainable artificial intelligence (XAI), widely used to interpret model decisions and suggest actionable alternatives. This work presents a structure-aware and robustness-oriented counterfactual search method based on the conditional Gaussian network classifier (CGNC). The CGNC has a generative structure that encodes conditional dependencies and potential causal relations among features through a directed acyclic graph (DAG). This structure naturally embeds feature relationships into the search process, eliminating the need for additional constraints to ensure consistency with the model's structural assumptions. We adopt a convergence-guaranteed cutting-set procedure as an adversarial optimization framework, which iteratively approximates solutions that satisfy global robustness conditions. To address the nonconvex quadratic structure induced by feature dependencies, we apply piecewise McCormick relaxation to reformulate the problem as a mixed-integer linear program (MILP), ensuring global optimality. Experimental results show that our method achieves strong robustness, with direct global optimization of the original formulation providing especially stable and efficient results. The proposed framework is extensible to more complex constraint settings, laying the groundwork for future advances in counterfactual reasoning under nonconvex quadratic formulations.

\keywords{Counterfactual Explanation \and Explainable Artificial Intelligence \and Adversarial Framework \and Robust Optimization \and Conditional Gaussian Network Classifier.}

\end{abstract}

\section{Introduction}
\label{sec:introduction}

Explainable artificial intelligence (XAI) seeks to improve the transparency and trustworthiness of machine learning models, allowing developers, users, and regulators to understand the rationale behind predictions \cite{barredoarrieta2020}. It has been widely adopted in high-stakes domains such as judicial decision-making, medical diagnosis, and financial risk management to ensure compliance and build trust \cite{deeks2019, vandervelden2022, cerneviciene2024}. However, traditional methods, including feature importance rankings and visualization, offer only partial insights and lack actionable guidance, limiting their practical utility \cite{ribeiro2016, lundberg2017}.

Recent work has therefore shifted toward actionable explanations, with \emph{counterfactual explanation} (CE) emerging as a key approach for its operational relevance. CE generates minimally altered instances that flip a model's prediction, formulated as an optimization problem \cite{wachter2018}. This allows users to ask questions like \enquote{What needs to change to achieve a desired outcome?} This facilitates model understanding and decision-making, as in credit scoring, where CE can both explain a denial and indicate which features should be modified for approval.

Despite this promise, most CE methods rely on black-box models such as neural networks and decision trees, making it difficult to assess their logic, reliability, and risks \cite{ghorbani2019, saeed2023}. Furthermore, many approaches generate explanations solely from input--output behavior after training, which can yield recommendations that violate the data-generating process and lack causal or semantic consistency. %\textcolor{red}{
In counterfactual search, this limitation is particularly critical, as explanations cannot be structurally verified against the model itself, making their validity difficult to assess beyond empirical behavior.%}

To overcome these limitations, this study proposes a robust counterfactual search method built on the \emph{conditional Gaussian network classifier} (CGNC). CGNCs provide explicit graphical structures and parameterized conditional Gaussian distributions that capture feature dependencies, offering a transparent and verifiable foundation. On top of this structure, we incorporate an adversarial optimization framework to enhance robustness. Together, these elements enable counterfactuals that remain faithful to the CGNC and are robust against perturbations.

\subsection{Related Work}
\label{subsec:relatedwork}

Although CE aims to alter predictions through minimal input changes, these changes are often unstable and sensitive to small variations such as noise, model updates, or feature mislabeling, which limits their reliability in practice \cite{slack2021, pawelczyk2022, dominguezolmedo2022}. Early attempts to improve feasibility introduced objectives that generate multiple candidate CEs simultaneously \cite{russell2019, karimi2020}. However, without stability or structural constraints, many of these explanations become implausible relative to the data, reducing their usefulness for actionable decision support.

To improve reliability, recent studies have examined robustness as a key property, focusing on sensitivities to input perturbations, model changes, and hyperparameter selection \cite{mishra2023}. Our work instead emphasizes \emph{robustness in recourse actions}, ensuring that predictions remain flipped even when the implemented actions deviate slightly from the proposed adjustments, as often occurs when users act under uncertain or imperfect conditions \cite{pawelczyk2022}.

Several studies have sought to strengthen recourse robustness by employing adversarial optimization frameworks that embed worst-case deviations into the feasible set design \cite{virgolin2023, maragno2024}. Nevertheless, many of these methods remain post-hoc, assessing solution quality afterward with metrics such as proximity, connectedness, and stability \cite{laugel2019}, rather than incorporating structural or causal constraints directly into the optimization. As a result, the generated counterfactuals often lack practical feasibility and suggest unrealistic changes.

A recent survey notes that desirable properties of CEs include validity, minimality, plausibility, and \emph{causality}, yet most methods cover only a subset of them \cite{guidotti2022}. To bridge this gap, algorithmic recourse emphasizes embedding quality metrics directly into the optimization phase \cite{karimi2021b}, for instance, by integrating feasible actions and cost functions to produce recourse sets \cite{ustun2019}. Later works use structural causal models (SCMs) to impose constraints \cite{mahajan2020, karimi2021a}, incorporate Mahalanobis distances \cite{kanamori2020}, or enforce feature-changing orderings \cite{kanamori2021}, aiming for optimization models better aligned with distributional or structural assumptions. These enhancements highlight a promising direction for advancing counterfactual design.

\subsection{Research Challenges and Contributions}

As CE techniques expand across application domains, most existing methods still assume feature independence, often leading to implausible recommendations. For example, a counterfactual might suggest increasing education level without adjusting age, disregarding their natural dependency. The key challenge is therefore to design counterfactual search methods that align with the data-generating process and produce structurally coherent recommendations.

As discussed in \cref{subsec:relatedwork}, our notion of causality refers to structural dependencies among features encoded by a \emph{directed acyclic graph} (DAG), rather than intervention-based causal reasoning \cite{guidotti2022}. In contrast, CE methods flip predictions by minimizing input changes under a fixed model, without ensuring validity under realistic interventions or relying on frameworks like SCMs or Pearl's input--output formulation \cite{pearl2009}. Building on this distinction, the present work focuses on model-intrinsic CE methods that account for feature dependencies.

To address these limitations, generative models that encode feature dependencies are needed to keep counterfactual search consistent with the data and model semantics. Bayesian networks (BNs) provide a natural starting point, and the previous study applied Bayesian network classifiers (BNCs) to counterfactual search with discrete features \cite{albini2020}. %\textcolor{red}{
However, these methods do not extend to continuous dependencies, which prevents counterfactual search from being cast as a continuous optimization problem necessary for defining distances and robustness under uncertainty. This study adopts the CGNC, in which the class label serves as a common parent and a DAG explicitly encodes conditional Gaussian dependencies among features. %} 
This structure supports interpretability and generative consistency, making CGNC a suitable basis for structurally aligned counterfactual generation.

Building on this model, we propose a structure-aware counterfactual search method that emphasizes robust recourse. By casting the task as an optimization problem, structural constraints and robustness requirements are directly embedded into the search. This ensures that generated counterfactuals remain consistent with the underlying generative model while remaining reliable under implementation errors \cite{justin2025}.

The key contributions of this work are summarized below:
\begin{itemize}
    \item %\textcolor{red}{
    We present the first structure-aware counterfactual search framework built on a \emph{continuous} BNC, formulating counterfactual generation on the CGNC as a constrained optimization problem that explicitly embeds conditional Gaussian dependencies.%}
    \item We leverage class-conditional \emph{whitening} and Mahalanobis-$\ell_p$ distances to define distribution-consistent distance metrics and adversarial \emph{uncertainty sets}, enabling robust recourse under perturbations.
    \item We employ a \emph{cutting-set procedure} as the adversarial optimization framework and establish convergence guarantees of the iterative process, providing theoretical support for robust counterfactual recourse.
    \item We analyze the nonconvex quadratic structure of the CGNC decision function and derive a tractable mixed-integer linear programming (MILP) formulation via \emph{piecewise McCormick relaxations}, enabling practical computation.
    \item We conduct empirical studies, including comparative evaluations, to examine how modeling feature dependencies affects counterfactual quality and robustness.
\end{itemize}

% \subsection{Organization of the Thesis}

% The outline of the thesis is as follows. \cref{ch:preliminaries} introduces traditional BNs and their extensions, and establishes the theoretical foundation of the CGNC. \cref{ch:methodology} formulates a robust counterfactual search method based on the CGNC, incorporating distance metrics and the adversarial optimization framework. \cref{ch:algorithm} develops the full formulation, derives the nonconvex quadratic structure, and shows how it can be reformulated as a MILP to bypass direct nonconvex solving. \cref{ch:experiments} presents the dataset, structure learning, and evaluates performance with feature dependencies and under different parameter settings. Finally, \cref{ch:conclusion} summarizes the findings, discusses limitations, and outlines directions for future work.

\subsection{Bayesian Network \& Bayesian Network Classifier}

A BN encodes the joint distribution of a set of random variables using a DAG, where each node corresponds to a variable and edges represent conditional dependencies. As a Bayesian framework, it supports flexible modeling of uncertainty and facilitates decision-making, while the graph structure enables the factorization of the joint distribution, leading to a compact representation and tractable inference \cite{koller2009, murphy2012}.

In classification settings, a BNC introduces a discrete categorical class variable $Y \in \mathcal{Y}$ as a common parent of all feature variables $\bm{X} = (X_1, \dots, X_n)$ \cite{friedman1997}. Each feature variable $X_i$ for $i \in [n]$ may additionally have a subset of other feature variables as parents, denoted by $\bm{X}_{\paset_i}$. Here, $[n] \coloneq \{1, \dots, n\}$ denotes the set of integers from $1$ to $n$, and $\paset_i \subset [n]$ is the parent index set for node $X_i$. A variable $X_j$ is considered a parent of $X_i$ if and only if $j \in \paset_i$. Under this structure, the class-conditional distribution for class $c$ factorizes as
\begin{equation}
    \label{bnc:dist}
    \prob(\bm{X} \mid Y = c) = \prod_{i=1}^{n} \prob(X_i \mid \bm{X}_{\paset_i}, Y = c) \eqperiod
\end{equation}

For a given input $\bm{x} = (x_1, \dots, x_n)$ and any class $c$, the posterior probability follows Bayes' theorem:
\begin{equation}
    \label{eq:post_prob}
    \prob(Y=c \mid \bm{x}) = \frac{\prob(Y=c) \cdot \prob(\bm{x} \mid Y=c)}{\prob(\bm{x})} \eqperiod
\end{equation}
In the \emph{maximum a posteriori} (MAP) rule, the denominator $\prob(\bm{x})$ is independent of $c$ and thus can be omitted. Using the factorized form of $\prob(\bm{x} \mid Y=c)$, the MAP classifier becomes
\begin{equation}
    \label{eq:bnc_classif}
    \hat{c} = \argmax_{c \in \mathcal{Y}} \; \prob(Y=c) \cdot \prod_{i=1}^{n} \prob(x_i \mid \bm{x}_{\paset_i}, Y=c) \eqperiod
\end{equation}
This formulation directly follows from the conditional independence structure of the network.

We consider three generative classifiers that differ in their ability to capture feature dependencies, with increasing structural flexibility: na{\"i}ve Bayes (NB), tree-augmented na{\"i}ve Bayes (TAN), and Bayesian network-augmented na{\"i}ve Bayes (BAN), as illustrated in \cref{fig:main-comparison}. In these diagrams, dashed edges represent class-to-feature dependencies, while solid edges denote additional conditional dependencies among features allowed under each structure. Each model progressively relaxes the conditional independence assumptions of the NB framework \cite{friedman1997}.

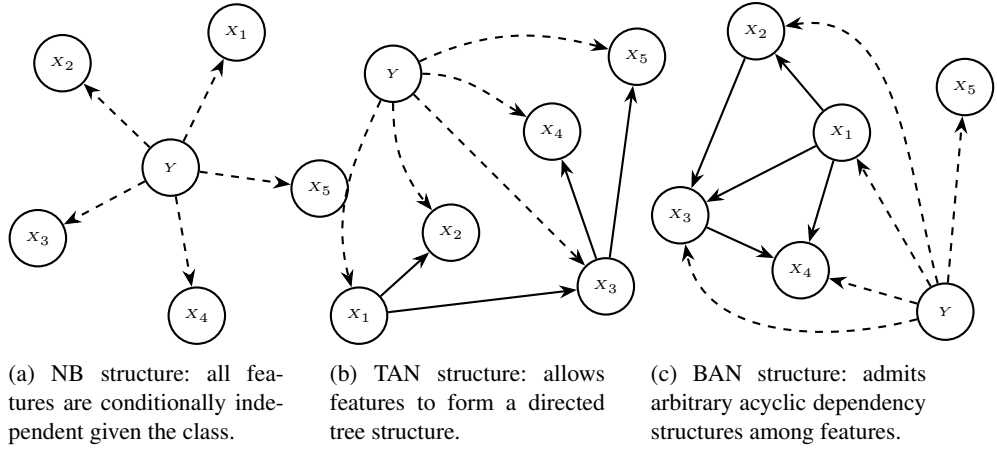
\begin{figure}[ht]
    \centering
    \begin{subfigure}[t]{0.3\textwidth}
        \centering
        \begin{tikzpicture}
            \node[node] (y) at (0,0) {$Y$};
            \node[node] (x1) at (64:2) {$X_1$};
            \node[node] (x2) at (136:2) {$X_2$};
            \node[node] (x3) at (208:2) {$X_3$};
            \node[node] (x4) at (280:2) {$X_4$};
            \node[node] (x5) at (352:2) {$X_5$};
            \draw[edge, dashed] (y) -- (x1);
            \draw[edge, dashed] (y) -- (x2);
            \draw[edge, dashed] (y) -- (x3);
            \draw[edge, dashed] (y) -- (x4);
            \draw[edge, dashed] (y) -- (x5);
        \end{tikzpicture}
        \caption{NB structure: all features are conditionally independent given the class.}
        \label{fig:str-nb}
    \end{subfigure}
    \hfill
    \begin{subfigure}[t]{0.3\textwidth}
        \centering
        \begin{tikzpicture}
            \node[node] (y) at (0,0) {$Y$};
            \node[node] (x1) at (262:3.25) {$X_1$};
            \node[node] (x2) at (290:2.25) {$X_2$};
            \node[node] (x3) at (315:4) {$X_3$};
            \node[node] (x4) at (340:2.25) {$X_4$};
            \node[node] (x5) at (4:3.25) {$X_5$};
            \draw[edge, dashed] (y) to[out=245, in=105] (x1);
            \draw[edge, dashed] (y) to[out=270, in=130] (x2);
            \draw[edge, dashed] (y) -- (x3);
            \draw[edge, dashed] (y) to[out=0, in=150] (x4);
            \draw[edge, dashed] (y) to[out=25, in=165] (x5);
            \draw[edge] (x1) -- (x2);
            \draw[edge] (x1) -- (x3);
            \draw[edge] (x3) -- (x4);
            \draw[edge] (x3) -- (x5);
        \end{tikzpicture}
        \caption{TAN structure: allows features to form a directed tree structure.}
        \label{fig:str-tan}
    \end{subfigure}
    \hfill
    \begin{subfigure}[t]{0.3\textwidth}
        \centering
        \begin{tikzpicture}
            \node[node] (y) at (0,0) {$Y$};
            \node[node] (x1) at (120:2.75) {$X_1$};
            \node[node] (x2) at (124:4.5) {$X_2$};
            \node[node] (x3) at (160:3.75) {$X_3$};
            \node[node] (x4) at (164:2) {$X_4$};
            \node[node] (x5) at (85:3) {$X_5$};
            \draw[edge, dashed] (y) -- (x1);
            \draw[edge, dashed] (y) to[out=105, in=10] (x2);
            \draw[edge, dashed] (y) to[out=195, in=280] (x3);
            \draw[edge, dashed] (y) -- (x4);
            \draw[edge, dashed] (y) -- (x5);
            \draw[edge] (x1) -- (x2);
            \draw[edge] (x1) -- (x3);
            \draw[edge] (x1) -- (x4);
            \draw[edge] (x2) -- (x3);
            \draw[edge] (x3) -- (x4);
        \end{tikzpicture}
        \caption{BAN structure: admits arbitrary acyclic dependency structures among features.}
        \label{fig:str-ban}
    \end{subfigure}
    \caption{Representative structures of Bayesian network classifiers}
    \label{fig:main-comparison}
\end{figure}

\subsection{Incorporating Linear Gaussian Structure}

% Grzegorczyk2010, geiger1994

In many real-world applications, observed features are often continuous. The BNC can therefore be extended to a CGNC, in which all features are continuous and conditioned on a discrete class variable. This framework combines the structural flexibility of BNs with the analytical tractability of linear Gaussian models, thereby rendering it suitable for continuous-data classification \cite{friedman1998, perez2006}.

Formally, for each feature node $X_i$ and class $c \in \mathcal{Y}$, the conditional distribution given its parents follows a linear Gaussian form:
\begin{gather}
    X_i \mid \bm{X}_{\paset_i}, Y=c \sim \mathcal{N}(\hat{\mu}_{i\mid c}, \sigma_{i\mid c}^2) \eqcomma \\
    \hat{\mu}_{i\mid c} = \sum_{j \in \paset_i} w_{ij}^c X_j + b_i^c  \eqcomma
\end{gather}
where $\sigma_{i \mid c}^2$ is a fixed conditional variance and $\hat{\mu}_{i \mid c}$ is an affine function of the parent variables. Here, $w_{ij}^c$ denotes the regression coefficient of parent $X_j$ on node $X_i$, and $b_i^c$ is the intercept term. All parameters are class-dependent and estimated independently for each class.

To clarify how CGNCs handle continuous features in likelihood computation, we state the following property, which formalizes the conditional Gaussian approximation used for tractable inference.

\begin{property}[Conditional gaussian approximation]
    \label{prop:glc}
    In CGNCs, the conditional probability of each continuous variable given its parents and class label is approximated by evaluating the corresponding univariate Gaussian PDF at the observed point:
    \begin{equation*}
        \prob(x_i \mid \bm{x}_{\paset_i}, Y = c) = \frac{1}{\sqrt{2 \pi \sigma_{i\mid c}^2}} \exp \left( - \frac{{(x_i - \hat{\mu}_{i\mid c})}^2}{2 \sigma_{i\mid c}^2} \right) \eqperiod
    \end{equation*}
    This approximation avoids integration, enabling tractable inference over continuous domains.
\end{property}

This approximation is directly applied in the MAP decision rule of \cref{eq:bnc_classif}. It ensures consistent evaluation across continuous features and preserves the relative magnitudes of conditional probabilities across classes and feature nodes \cite{john1995}.

\section{Robust CE Framework}
\label{sec:methodology}

\subsection{Incorporating CGNC into the CE Framework}

In the context of a binary CGNC, where the classification target takes values from $\mathcal{Y} = \{0, 1\}$, let $\mathcal{X} \subset \mathbb{R}^n$ denote the input domain, and let $\tau \in (0, 1)$ be a threshold specifying the minimum posterior probability required for the desired classification outcome. Given a factual instance $\bm{x}^{\mathrm{fac}} \in \mathcal{X}$ that is initially classified as class $0$ with $\prob(Y = 1 \mid \bm{x}^{\mathrm{fac}}) < \tau$, the goal is to identify a nearby instance whose posterior probability for class $1$ exceeds $\tau$.

Accordingly, the baseline counterfactual search method is posed as the following optimization problem:
\begin{align}
    \label{eq:std_constr}
    \begin{split}
        & \argmin_{\bm{x} \in \mathcal{X}} \enspace d(\bm{x}^{\mathrm{fac}}, \bm{x}) \\
        & \subjto \enspace \prob(Y = 1 \mid \bm{x}) \geq \tau \eqcomma
    \end{split}
\end{align}
where $d: \mathcal{X} \times \mathcal{X} \to \mathbb{R}_{\geq 0}$ denotes a distance function that quantifies the deviation between instances. Its specific form and properties are discussed in \cref{mahalanobis}.

As derived from the law of total probability, the denominator in the posterior expression in \cref{eq:post_prob} takes the form
\begin{equation}
    \mathbb{P}(\bm{x}) = \sum_{c' \in \mathcal{Y}} \mathbb{P}(Y = c') \cdot \mathbb{P}(\bm{x} \mid Y = c') \eqcomma
\end{equation}
which serves as a scaling factor that depends only on $\bm{x}$ and is independent of any class label. It effectively acts as a normalization constant to ensure that the posterior probabilities sum to one. We define the unnormalized joint likelihood of class $c \in \mathcal{Y}$ as
\begin{equation}
    \label{eq:prop_clf}
    \begin{split}
        h_c(\bm{x}) & \coloneq \prob(Y = c) \cdot \prob(\bm{x} \mid Y=c) \\
        &= \prob(Y = c) \cdot \prod_{i=1}^{n} \prob(x_i \mid \bm{x}_{\paset_i}, Y=c) \eqperiod
    \end{split}
\end{equation}
Substituting this into the constraint in \cref{eq:std_constr}, we obtain
\begin{equation}
    \frac{h_1(\bm{x})}{h_0(\bm{x}) + h_1(\bm{x})} \geq \tau \Longrightarrow \frac{h_1(\bm{x})}{h_0(\bm{x})} \geq \frac{\tau}{1 - \tau} \eqperiod
\end{equation}

Due to the proportionality $\prob(Y = c \mid \bm{x}) \propto h_c(\bm{x})$, the likelihood ratio in the constraint preserves the relative ordering of the posterior probabilities, thereby eliminating the dependence on the evidence term $\prob(\bm{x})$. In practice, a common choice is $\tau = 0.5$, in which case the constraint reduces to $h_1(\bm{x}) \geq h_0(\bm{x})$. This reflects the MAP decision rule for binary classification as expressed in \cref{eq:bnc_classif}, requiring that the counterfactual instance be more likely assigned to class $1$ than to class $0$.

In high-dimensional settings, directly multiplying numerous small conditional densities can lead to \emph{arithmetic underflow} in finite-precision arithmetic. To address this, we first take the logarithm of $h_c(\bm{x})$, which converts products into sums:
\begin{equation}
    \label{eq:prop_logclf}
    \log h_c(\bm{x}) = \log \prob(Y = c) + \sum_{i=1}^{n} \log \prob(x_i \mid \bm{x}_{\paset_i}, Y = c) \eqperiod
\end{equation}
This log-domain representation avoids multiplicative instability and facilitates downstream optimization.

Building on this transformation, we respectively define the \emph{log-relative likelihood} and the corresponding log-threshold:
\begin{gather}
    H(\bm{x}) \coloneq \log \frac{h_1(\bm{x})}{h_0(\bm{x})} = \log h_1(\bm{x}) - \log h_0(\bm{x}) \eqcomma \\
    \tau' \coloneq \log \frac{\tau}{1 - \tau} \eqperiod
\end{gather}
The decision function $H: \mathcal{X} \to \mathbb{R}$ captures the difference between the unnormalized log-joint likelihoods of the two classes, and $\tau' \in \mathbb{R}$ denotes the logit-transformed decision threshold. Based on these definitions, the counterfactual search problem can be reformulated as
\begin{align}
    \begin{split}
        & \argmin_{\bm{x} \in \mathcal{X}} \enspace d(\bm{x}^{\mathrm{fac}}, \bm{x}) \\
        & \subjto \enspace H(\bm{x}) \geq \tau' \eqperiod
    \end{split}
\end{align}

\subsection{Distance Metric and Whitening Transformation}
\label{mahalanobis}

In the original data space, feature scales vary and statistical dependencies exist. Applying Euclidean distance directly ignores these properties, leading to distorted proximity estimates. While standardization addresses scale imbalance, it does not eliminate correlations. In contrast, \emph{whitening} transforms the data into a space where features are both uncorrelated and normalized, yielding a more faithful geometric representation.

In this study, counterfactual search begins from a factual instance classified as class~$0$. Under the CGNC framework, features are generated conditionally on the class label, implying that any counterfactual derived from a class-$0$ instance is still assumed to follow $\prob(\bm{X} \mid Y=0)$, regardless of its eventual predicted label. To remain consistent with this generative constraint, we construct the whitening transformation solely based on the statistical profile of class~$0$.

Notably, applying the class-$0$ whitening globally to the input data would distort the geometry of class~$1$ and require re-deriving its class-conditional parameters in the transformed space, without reducing the computational burden. By confining whitening to the distance metric, we preserve the original data geometry while enabling distribution-aware distance evaluation. Therefore, whitening is applied only within the metric.

Accordingly, we adopt a Mahalanobis-$\ell_p$ norm that measures distances in the whitened space \cite{kulis2013}. Starting from $\bm{\Sigma}_0$, the covariance matrix of class~$0$, we construct the whitening matrix $\bm{\Sigma}_0^{-1/2}$ to transform data based on its covariance structure. The distance between any two points $\bm{x}, \bm{x}' \in \mathcal{X}$ is defined as
\begin{equation}
    d(\bm{x}', \bm{x}) \coloneq {\lVert \bm{x} - \bm{x}' \rVert}_{p, \bm{\Sigma}^{-1/2}_0} = { \left\lVert \bm{\Sigma}^{-1/2}_0 (\bm{x} - \bm{x}') \right\rVert }_p  \eqperiod
\end{equation}
This norm retains the geometric essence of the classical Mahalanobis distance by weighting directions with respect to the covariance structure of class $0$ \cite{demaesschalck2000}.\footnote{When $p = 2$, this reduces to the classical Mahalanobis distance $d(\bm{x}', \bm{x}) = \sqrt{(\bm{x} - \bm{x}')^\top \bm{\Sigma}_0^{-1} (\bm{x} - \bm{x}')}$, commonly used in classical statistical inference.}

As illustrated in \cref{fig:mahalanobis_lp}, the whitening transformation maps Mahalanobis-$\ell_p$ balls in the original space to standard $\ell_p$ balls, with red points indicating samples from class $0$. This clearly reveals the alignment between the transformed geometry and the chosen distance metric.

\begin{figure}[ht]
    \centering
    \begin{subfigure}[t]{0.49\textwidth}
        \centering
        \includegraphics[scale=0.36]{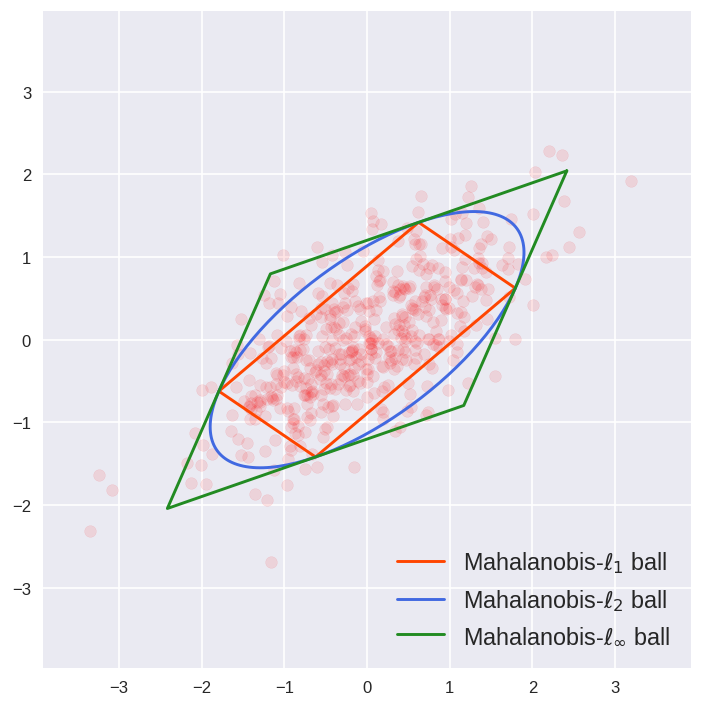}
        \caption{In the original space, the Mahalanobis-$\ell_p$ balls are distorted by feature scales and correlations.}
    \end{subfigure}
    \hfill
    \begin{subfigure}[t]{0.49\textwidth}
        \centering
        \includegraphics[scale=0.36]{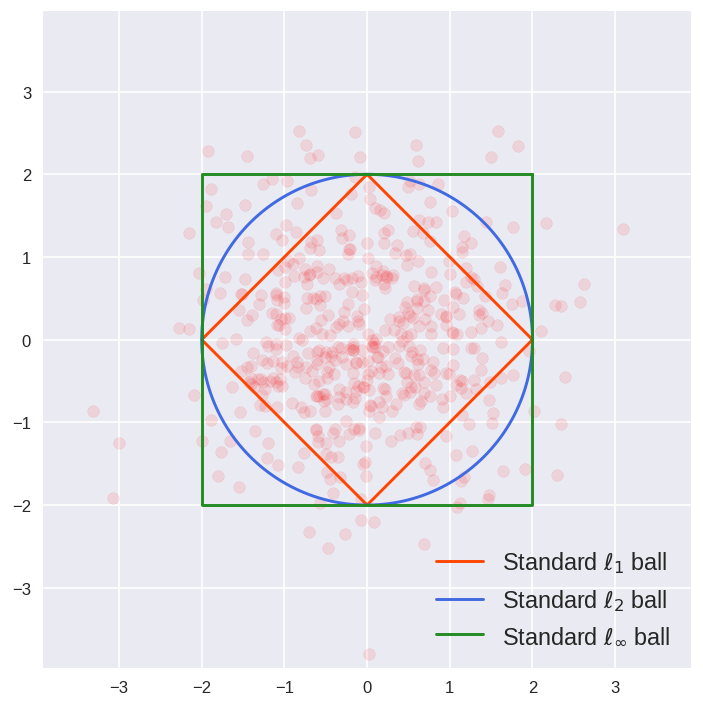}
        \caption{After whitening, these balls are transformed into standard $\ell_p$ balls.}
    \end{subfigure}
    \caption{Comparison of Mahalanobis-$\ell_p$ balls in the original and whitened spaces}
    \label{fig:mahalanobis_lp}
\end{figure}

Since the principal inverse square root of a matrix is not unique, we compute the whitening matrix for the Mahalanobis-$\ell_p$ distance using the Cholesky decomposition of $\bm{\Sigma}_0^{-1}$. This produces a unique lower triangular factor with positive diagonal entries, ensuring numerical stability and preserving the original variable ordering, both essential when the whitening transform is embedded in a distance metric. Compared to principal component analysis (PCA) or zero-phase component analysis (ZCA) whitening, the Cholesky approach avoids rotational ambiguity and keeps the Mahalanobis metric aligned with the original feature axes, which is particularly advantageous for interpretability tasks \cite{kessy2018}.

\subsection{Robustness in Counterfactual Search}

In real-world deployments, CE is expected to remain stable under small and plausible implementation errors. To account for such perturbations, we incorporate robustness into the optimization process by requiring the classification constraint to hold for all admissible perturbations within an uncertainty set \cite{bental2009}.

Let $\bm{\delta} \in \mathbb{R}^n$ denote a perturbation applied to a counterfactual instance. We define the uncertainty set $\mathcal{U}_\gamma$, parameterized by a robustness budget $\gamma > 0$, as
\begin{equation}
    \mathcal{U}_\gamma \coloneq \left\{\bm{\delta} \in \mathbb{R}^n \;\middle|\; {\lVert \bm{\delta} \rVert}_{p, \bm{\Sigma}^{-1/2}_0} \leq \gamma \right\} \eqcomma
\end{equation}
where the norm is defined in the whitened space of class~$0$, ensuring alignment with the geometry introduced in \cref{mahalanobis}. This formulation enforces robustness through additive perturbations bounded by a structured, distribution-aware region. Different choices of norms induce different geometric properties in the uncertainty set.

With this formulation, the optimization problem with robustness to recourse requires the classification constraint to hold for all perturbed instances within the uncertainty set $\mathcal{U}_\gamma$. The optimization problem is then defined as
\begin{align}
    \label{eq:robust_opt}
    \begin{split}
        & \argmin_{\bm{x} \in \mathcal{X}} \enspace d(\bm{x}^{\mathrm{fac}}, \bm{x}) \\
        & \subjto \enspace H(\bm{x} + \bm{\delta}) \geq \tau' \enspace \forall \bm{\delta} \in \mathcal{U}_\gamma \eqperiod
    \end{split}
\end{align}

Consequently, enforcing the constraint for every perturbation in $\mathcal{U}_\gamma$ yields a semi-infinite program, where the decision variable is finite-dimensional but the inequality constraints are infinitely many. This makes the problem intractable in its original form. We adopt the cutting-set procedure, which reformulates the robust constraint into an iterative process alternating between solving a restricted problem over a finite scenario set and identifying new worst-case perturbations that may violate the constraint \cite{mutapcic2009}.

At each iteration, the \emph{master problem} (MP) solves
\begin{align}
    \begin{split}
        & \argmin_{\bm{x} \in \mathcal{X}} \enspace d(\bm{x}^{\mathrm{fac}}, \bm{x}) \\
        & \subjto \enspace H(\bm{x} + \hat{\bm{\delta}}) \geq \tau' \enspace \forall \hat{\bm{\delta}} \in \mathcal{S} \eqcomma
    \end{split}
\end{align}
where $\mathcal{S} \subset \mathcal{U}_\gamma$ is a finite scenario set of previously identified perturbation scenarios. Initially, we let $\mathcal{S} = \{ \bm{0} \}$, so that the first MP considers only the unperturbed case. Let $\hat{\bm{x}}$ denote the current solution to the MP, then the \emph{adversarial problem} (AP) evaluates
\begin{equation}
    \argmax_{\bm{\delta} \in \mathcal{U}_\gamma} \enspace [\tau' - H(\hat{\bm{x}} + \bm{\delta})] \eqcomma
\end{equation}
which identifies a perturbation $\hat{\bm{\delta}}$ that most violates the robustness condition. If a violation is found, it is added to $\mathcal{S}$, and the MP is solved again with the extended set.

This iterative procedure incrementally refines the constraint set while keeping each MP tractable. As illustrated by the example in \cref{fig:robustness}, the resulting solution $\hat{\bm{x}}^*$ satisfies the robustness condition over the growing scenario set $\mathcal{S}$, a discrete approximation of the original uncertainty set $\mathcal{U}_\gamma$. This decomposition replaces the original formulation with a sequence of finite, solvable subproblems, without significantly compromising robustness.

\begin{figure}[ht]
    \centering
    \includegraphics[scale=0.36]{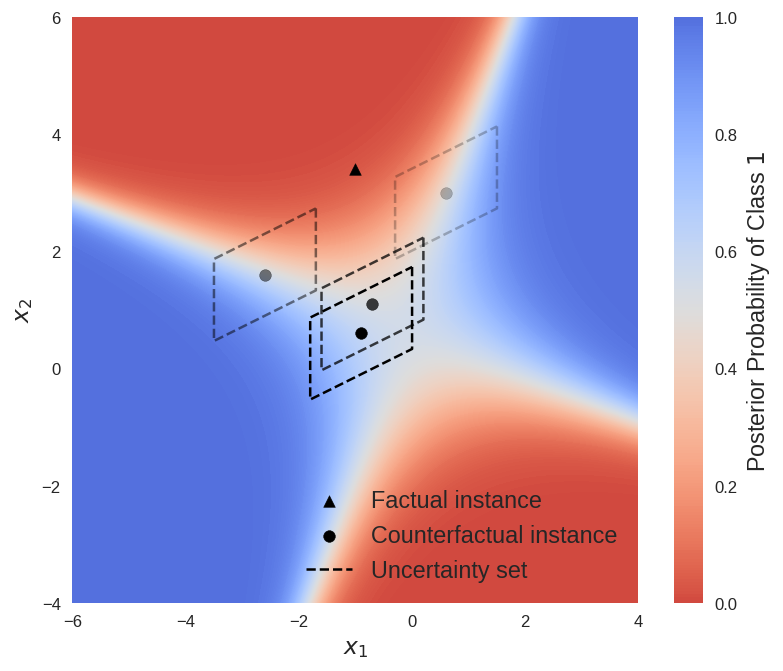}
    \caption{Iterative refinement of uncertainty sets in robust counterfactual search}
    \label{fig:robustness}
\end{figure}

\section{Algorithms}
\label{sec:problemformulation}

\subsection{Decision Function Expansion and Algorithm Design}

To reformulate the MP and AP as tractable optimization models, we begin by expanding the log-relative likelihood structure of the decision function $H$, evaluated at perturbed inputs $\bm{x} + \bm{\delta}$. This yields a quadratic form arising from the squared deviations of conditional Gaussian densities, involving both quadratic terms such as $x_i^2$ and $\delta_i^2$, and bilinear terms such as $x_j x_k$ and $\delta_j \delta_k$ for $j \neq k$. We summarize the key structural components relevant to modeling below. 

For each feature node $X_i$, we define
\begin{equation}
    \label{eq:indexset}
    \paset_i^+ \coloneq \{i\} \cup \paset_i \eqcomma
\end{equation}
and, for each class $c$, introduce the coefficient
\begin{equation}
    \label{eq:coeff}
    a_{ij}^c \coloneq
    \begin{cases}
        1, & j = i, \\
        -w_{ij}^c, & j \in \paset_i
    \end{cases} \eqperiod
\end{equation}
With the conditional mean expressed as a linear combination of the parent nodes, the set $\paset_i^+$ contains the variable indices involved in the conditional deviation, while the coefficients $a_{ij}^c$ specify the corresponding linear relationships between a node and its parents.

A full algebraic derivation of the function expansion is provided in \cref{appx:quad_expand}. Based on these results, and replacing the class prior $\rho_c \equiv \prob(Y = c)$ with symbolic notation, the expanded formulation of the MP is
\begin{align}
    \begin{split}
        & \! \argmin_{\bm{x} \in \mathcal{X}} \enspace d(\bm{x}^{\mathrm{fac}}, \bm{x}) \\
        & \begin{aligned}
            \subjto \enspace & \sum_{c \in \mathcal{Y}} (2c-1) \left[ \log \rho_c - \sum_{i=1}^{n} \left( \log \sigma_{i\mid c} + \frac{D_i^c(\bm{x}; \hat{\bm{\delta}})}{2 \sigma_{i\mid c}^2} \right) \right] \geq \tau' \enspace \forall \hat{\bm{\delta}} \in \mathcal{S}, \\
            & D_i^c(\bm{x}; \hat{\bm{\delta}}) = \sum_{\mathclap{\substack{j \in \paset_i^+ \\ k \in \paset_i^+}}} a_{ij}^c a_{ik}^c x_j x_k + 2 \xi_i^c(\hat{\bm{\delta}}) \sum_{\mathclap{j \in \paset_i^+}} a_{ij}^c x_j + {\xi_i^c(\hat{\bm{\delta}})}^2 \enspace \forall c \in \mathcal{Y}, i \in [n], \hat{\bm{\delta}} \in \mathcal{S}, \\
            & \xi_i^c(\hat{\bm{\delta}}) = \sum_{\mathclap{k \in \paset_i^+}} a_{ik}^c \hat{\delta}_k - b_i^c \enspace \forall c \in \mathcal{Y}, i \in [n], \hat{\bm{\delta}} \in \mathcal{S} \eqperiod
        \end{aligned}
    \end{split}
\end{align}
Similarly, the AP is
\begin{align}
    \begin{split}
        & \! \argmax_{\bm{\delta} \in \mathcal{U}_\gamma} \enspace \tau' - \sum_{c \in \mathcal{Y}} (2c-1) \left[ \log \rho_c - \sum_{i=1}^{n} \left( \log \sigma_{i\mid c} + \frac{D_i^c(\bm{\delta}; \hat{\bm{x}})}{2 \sigma_{i\mid c}^2} \right) \right] \\
        & \begin{aligned}
            \subjto \enspace & D_i^c(\bm{\delta}; \hat{\bm{x}}) = \sum_{\mathclap{\substack{j \in \paset_i^+ \\ k \in \paset_i^+}}} a_{ij}^c a_{ik}^c \delta_j \delta_k + 2 \xi_i^c(\hat{\bm{x}}) \sum_{\mathclap{j \in \paset_i^+}} a_{ij}^c \delta_j + {\xi_i^c(\hat{\bm{x}})}^2 \enspace \forall c \in \mathcal{Y}, i \in [n], \\
            & \xi_i^c(\hat{\bm{x}}) = \sum_{\mathclap{k \in \paset_i^+}} a_{ik}^c \hat{x}_k - b_i^c \enspace \forall c \in \mathcal{Y}, i \in [n] \eqperiod
        \end{aligned}
    \end{split}
\end{align}

To ensure finite termination of the counterfactual search algorithm, the decision function $H$ must be Lipschitz continuous over a compact domain $\mathcal{X}$. Such regularity guarantees that the cutting-set procedure reduces the violation of the robustness condition $\varphi$ below any tolerance level $\varepsilon > 0$ in finitely many iterations \cite{mutapcic2009}. The following theorem formalizes this property, with the full proof in \cref{appx:conv}.

\begin{theorem}[Finite-termination guarantee]
    \label{lemma1}
    Assume that the input domain $\mathcal{X} \subset \mathbb{R}^n$ is compact, and that the classifier $H: \mathcal{X} \to \mathbb{R}$ is Lipschitz continuous with a constant $G > 0$, meaning that for any $\bm{x}, \bm{x}' \in \mathcal{X}$,
    \begin{equation*}
        |H(\bm{x}) - H(\bm{x}')| \leq G \cdot \lVert \bm{x} - \bm{x}' \rVert \eqcomma
    \end{equation*}
    where $\lVert \cdot \rVert$ denotes the primary norm used in the optimization domain. Then, for any tolerance level $\varepsilon > 0$, the cutting-set procedure terminates in a finite number of iterations. The final solution $\hat{\bm{x}}^* \in \mathcal{X}$ is guaranteed to satisfy
    \begin{equation*}
        \max_{\bm{\delta} \in \mathcal{U}_\gamma} \; [\tau' - H(\hat{\bm{x}}^* + \bm{\delta})] \leq \varepsilon \eqperiod
    \end{equation*}
\end{theorem}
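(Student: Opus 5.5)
We argue by contradiction, using the standard cutting-set argument of \cite{mutapcic2009}: if the procedure never terminates, then the master-problem iterates must be well separated, which compactness of $\mathcal{X}$ forbids. First we fix the termination rule. At iteration $k$ the MP returns $\hat{\bm{x}}_k$, and the AP returns a maximizer
\[
\hat{\bm{\delta}}_k \in \argmax_{\bm{\delta}\in\mathcal{U}_\gamma}\,[\tau' - H(\hat{\bm{x}}_k+\bm{\delta})],
\]
with optimal value $\varphi_k$. The procedure stops as soon as $\varphi_k\le\varepsilon$. Otherwise it sets $\mathcal{S}\leftarrow\mathcal{S}\cup\{\hat{\bm{\delta}}_k\}$ and continues.

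If the procedure stops, the conclusion is exactly the stopping test, so only finiteness needs proof. We also assume implicitly that the points $\hat{\bm{x}}_k+\bm{\delta}$ lie in the domain of $H$. We will take $\mathcal{X}$ closed under the perturbations, or regard $H$ as defined and $G$-Lipschitz on the compact set $\mathcal{X}+\mathcal{U}_\gamma$. This set is compact because $\mathcal{U}_\gamma$ is a closed, bounded ball of a norm.

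\textbf{Key step.} Suppose the procedure runs forever, so $\varphi_k>\varepsilon$ for every $k$, that is, $H(\hat{\bm{x}}_k+\hat{\bm{\delta}}_k) < \tau'-\varepsilon$. Take any later iteration $m>k$. The scenario $\hat{\bm{\delta}}_k$ is now in $\mathcal{S}$, so MP feasibility gives $H(\hat{\bm{x}}_m+\hat{\bm{\delta}}_k)\ge\tau'$. Subtracting the two inequalities and applying the Lipschitz bound to the points $\hat{\bm{x}}_m+\hat{\bm{\delta}}_k$ and $\hat{\bm{x}}_k+\hat{\bm{\delta}}_k$ yields
\[
\varepsilon < H(\hat{\bm{x}}_m+\hat{\bm{\delta}}_k)-H(\hat{\bm{x}}_k+\hat{\bm{\delta}}_k)\le G\,\lVert \hat{\bm{x}}_m-\hat{\bm{x}}_k\rVert .
\]
Hence $\lVert\hat{\bm{x}}_m-\hat{\bm{x}}_k\rVert>\varepsilon/G$ for all $m\neq k$.

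\textbf{Conclusion.} The iterates therefore form an infinite $\varepsilon/G$-separated subset of the compact set $\mathcal{X}$. This is impossible: by Bolzano--Weierstrass some subsequence converges and so is Cauchy, which contradicts the separation.

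For an explicit count we use a packing argument. Balls of radius $\varepsilon/(2G)$ around the iterates are pairwise disjoint and all lie inside $\mathcal{X}+B(\varepsilon/(2G))$. Comparing volumes gives at most
\[
\mathrm{vol}\bigl(\mathcal{X}+B(\varepsilon/2G)\bigr)\big/\mathrm{vol}\bigl(B(\varepsilon/2G)\bigr)=O\bigl((G\,\mathrm{diam}\,\mathcal{X}/\varepsilon)^n\bigr)
\]
iterations. All norms on $\mathbb{R}^n$ are equivalent, so it does not matter which one appears in the Lipschitz condition; changing it only changes the constants.

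\textbf{Main obstacle.} The difficulty is not the separation inequality but making every step well posed. Three things must hold:
\begin{itemize}
\item each MP is feasible and attains its minimum, so that $\hat{\bm{x}}_k$ exists;
\item the AP attains its maximum;
\item $H$ is Lipschitz on the perturbed points, not only on $\mathcal{X}$.
\end{itemize}
For the MP and AP we would argue from continuity of $d$ and $H$ (a quadratic, see \cref{appx:quad_expand}) over compact feasible sets. The MP feasible set is closed because it is a finite intersection of superlevel sets of continuous functions. If some MP becomes infeasible, the robust problem \eqref{eq:robust_opt} is infeasible too, since each MP is a relaxation of it; the procedure then terminates finitely by reporting infeasibility, and we would state this as the degenerate case.
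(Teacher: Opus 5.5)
Your proof is correct and follows the paper's route. Every later master-problem iterate must satisfy the constraint for each earlier violating scenario, so Lipschitz continuity forces pairwise $\varepsilon/G$ separation of the iterates, and a ball-packing (or compactness) argument then bounds the number of iterations. The paper additionally derives an explicit Lipschitz constant for the CGNC decision function from a gradient bound on a norm ball of radius $R$, which you can skip because the theorem assumes Lipschitz continuity. You, in turn, write out the separation inequality that the paper only asserts, and you correctly note that $H$ must be Lipschitz on $\mathcal{X}+\mathcal{U}_\gamma$, not merely on $\mathcal{X}$.
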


\begin{proof}[Theorem~\ref{lemma1}]
    \label{appx:conv}

    We adopt the cutting-set framework for solving robust optimization problems via iterative scenario refinement. The convergence analysis provides an upper bound on the number of iterations required, assuming Lipschitz continuity of the constraint functions and boundedness of the feasible region \cite{mutapcic2009}.
    
    We aim to prove that the log-relative likelihood function $H(\bm{x})$ is Lipschitz continuous over the input domain $\mathcal{X}$, where distances are measured under a general norm $\lVert \cdot \rVert$. In our implementation, this norm corresponds to the Mahalanobis-$\ell_p$ norm, as specified in \cref{mahalanobis}.
    
    For any $\bm{x}, \bm{x}' \in \mathcal{X}$, differentiability implies:
    \begin{equation}
        H(\bm{x}) - H(\bm{x}') = {\nabla H(\bm{x})}^\top (\bm{x} - \bm{x}')
    \end{equation}
    
    Applying H\"older's inequality, we obtain the standard Lipschitz condition:
    \begin{equation}
        | H(\bm{x}) - H(\bm{x}') | \leq {\lVert \nabla H(\bm{x}) \rVert}_* \cdot \lVert \bm{x} - \bm{x}' \rVert
    \end{equation}
    where $\lVert \cdot \rVert$ denotes the primary norm used to define the geometry of the input space, and ${\lVert \cdot \rVert}_*$ is its corresponding dual norm. This dual norm structure reflects the standard pairing between function variations and input geometry in Lipschitz functional spaces \cite{weaver2018}. To ensure boundedness of the gradient over the domain, we assume that $\mathcal{X}$ is compact and fully contained within a norm ball of radius $R$, that is,
    
    \begin{equation}
        \mathcal{X} \subseteq \{ \bm{x} \in \mathbb{R}^n: \lVert \bm{x} \rVert \leq R \}
    \end{equation}
    
    We now proceed to derive an explicit upper bound on $G$ under the structural assumptions of the CGNC model.
    
    We first observe that
    \begin{equation}
        \nabla H(\bm{x}) = \nabla \log h_1(\bm{x}) - \nabla \log h_0(\bm{x})
    \end{equation}
    By the triangle inequality, it follows that
    \begin{equation}
        {\lVert \nabla H(\bm{x}) \rVert}_* \leq {\lVert \nabla \log h_1(\bm{x}) \rVert}_* + {\lVert \nabla \log h_0(\bm{x}) \rVert}_*
    \end{equation}
    
    Thus, the problem reduces to bounding the gradient norm of the log-joint probability under the CGNC for each class $c \in \mathcal{Y}$.
    \begin{align}
        \begin{split}
            \nabla \log h_c(\bm{x}) &= \nabla \left( - \sum_{i=1}^n \frac{{\left({\bm{a}_i^c}^\top \bm{x} - b_i^c \right)}^2}{2\sigma_{i\mid c}^2}  + C_c \right) \\
            &= - \sum_{i=1}^n \frac{{\bm{a}_i^c}^\top \bm{x} - b_i^c}{\sigma_{i\mid c}^2} \bm{a}_i^c
        \end{split}
    \end{align}
    
    This reflects the general result that the Lipschitz constant of the log-joint probability in a GBN is upper-bounded by the sum of the Lipschitz constants of its local conditional components \cite{honorio2011}.
    
    Taking the norm of the gradient expression, we use the H\"older's inequality to bound $\left| {\bm{a}_i^c}^\top \bm{x} \right| \leq {\lVert \bm{a}_i^c \rVert}_* \cdot \lVert \bm{x} \rVert \leq {\lVert \bm{a}_i^c \rVert}_* R$, and obtain
    \begin{equation}
        \begin{split}
            {\lVert \nabla \log h_c(\bm{x}) \rVert}_* & \leq \sum_{i=1}^n \frac{\left| {\bm{a}_i^c}^\top \bm{x} - b_i^c \right|}{\sigma_{i\mid c}^2} \cdot {\lVert \bm{a}_i^c \rVert}_* \\
            & \leq \sum_{i=1}^n \frac{{\lVert \bm{a}_i^c \rVert}_* R + |b_i^c|}{\sigma_{i\mid c}^2} \cdot {\lVert \bm{a}_i^c \rVert}_*
        \end{split}
    \end{equation}
    
    Finally, summing over all nodes and both classes yields an upper bound on the gradient norm of $H(\bm{x})$. Under this assumption, the Lipschitz constant of $H$ with respect to the norm $\lVert \cdot \rVert$ is given by
    \begin{align}
        \begin{split}
            G & \coloneq \sup_{\bm{x} \in \mathcal{X}} {\lVert \nabla H(\bm{x}) \rVert}_* \\
            & = \sum_{c \in \mathcal{Y}} \sum_{i=1}^n \frac{{\lVert \bm{a}_i^c \rVert}_* R + |b_i^c|}{\sigma_{i\mid c}^2} \cdot {\lVert \bm{a}_i^c \rVert}_*
        \end{split}
    \end{align}
    
    The Lipschitz continuity of $H$ ensures that any two solutions found by the MP must be separated by at least $\varepsilon / G$ in norm. Consequently, each new point must lie outside a ball of diameter $\varepsilon / G$ centered at any previous iterate. The number of such disjoint balls that can be packed into the feasible region, whose radius is $R$, leads to the following upper bound on the total number of iterations:
    \begin{equation}
    T \leq \left( \frac{RG}{\varepsilon} \right)^n
    \end{equation}
    which follows from a ball-packing argument \cite{mutapcic2009}.
    \qed
\end{proof}

With this guarantee, \cref{alg:opt_qp} presents the concrete implementation of the counterfactual search procedure, which alternates between MP and AP.

\begin{algorithm}
    \caption{Cutting-set procedure for robust counterfactual search}
    \label{alg:opt_qp}
    \begin{algorithmic}[1]
        \Require{$\bm{x}^{\mathrm{fac}} \in \mathcal{X}$, $\gamma > 0$, $\varepsilon > 0$}
        \Ensure{$\hat{\bm{x}}^*$}
        \State $\mathcal{S} \gets \{\bm{0}\}$
        \Repeat
            \State $\hat{\bm{x}} \gets$ solve MP with $\bm{x}^{\mathrm{fac}}$ and $\mathcal{S}$
            \State $\hat{\bm{\delta}} \gets$ solve AP with $\hat{\bm{x}}$ and $\mathcal{U}_\gamma$
            \State $\varphi \gets$ objective value of AP
            \State $\mathcal{S} \gets \mathcal{S} \cup \{ \hat{\bm{\delta}} \}$
        \Until{$\varphi \leq \varepsilon$}
        \State $\hat{\bm{x}}^* \gets \hat{\bm{x}}$
        \State \Return{$\hat{\bm{x}}^*$}
    \end{algorithmic}
\end{algorithm}

\subsection{Quadratic Structure to MILP Reformulation}
\label{appx:milp}

Due to the presence of numerous second-order terms in the expanded formulation, both the MP and the AP take the form of quadratically constrained quadratic programs (QCQPs), which are specifically nonconvex in this setting.\footnote{The AP maximizes violation over an uncertainty set. When the set is linearly bounded, such as in Mahalanobis-$\ell_1$ or Mahalanobis-$\ell_\infty$ balls, the problem reduces to a quadratic program (QP), since only the objective is quadratic. As QP is a special case of QCQP, we still refer to both MP and AP as QCQPs.} The derivation of this nonconvexity is provided in \cref{appx:quad_nonconvex}. These problems are not only difficult to solve directly but also incompatible with standard MILP frameworks, precluding the use of global optimality bounds and convergence guarantees offered by MILP solvers.

For the purpose of reformulating the problem within the MILP framework, we transform the second-order terms into linear form. Specifically, we apply McCormick convex envelope relaxations, which approximate each second-order term using a set of linear inequalities derived from prescribed variable bounds \cite{mccormick1976}.

To systematically handle all second-order terms, we define the following index set of variable pairs:
\begin{equation}
    \mathcal{Q} \coloneq \bigcup_{i=1}^{n} \{ (j, k) \in \paset_i^+ \times \paset_i^+ \mid j \leq k \} \eqperiod
\end{equation}
Only index pairs with $j \leq k$ are retained to avoid symmetric duplicates. This set enumerates all unique variable pairs whose product appears as a second-order term in the expanded formulation, each of which will be approximated via McCormick relaxation.

However, the conventional McCormick method relaxes second-order terms solely based on global bounds, often resulting in excessively large feasible regions and degraded approximation quality. To mitigate this issue, we adopt a piecewise McCormick relaxation. For each $j \in [n]$, the feasible range of $x_j$ is given as $[\underline{x}_j, \overline{x}_j]$, which is then divided into $m \in \mathbb{N}$ subintervals, and a local linear relaxation is constructed within each subinterval \cite{bergamini2005, karuppiah2006}. The breakpoints are denoted as $\underline{x}_j = l_j^1 < l_j^2 < \dots < l_j^{m + 1} = \overline{x}_j$. For implementation purposes, we adopt uniform partitioning across all variables to ensure simplicity and consistency \cite{hasan2010}.

For each $j \in [n]$, we introduce binary variables $\lambda_j^r \in \{0, 1\}$ indicating whether $x_j$ falls into the $r$-th subinterval $[l_j^r, l_j^{r+1}]$. To ensure that exactly one segment is active for each variable, the constraint $\sum_{r=1}^m \lambda_j^r = 1$ is enforced. Under this partitioning structure, each relaxation variable $z_{jk} \approx x_j x_k$ is approximated using McCormick envelope constraints within its corresponding segment, resulting in a formulation that can be represented within the MILP framework.

Applying the above relaxation mechanism to both optimization problems, we first consider the MP, where the second-order terms $x_j x_k$ are approximated using the McCormick relaxation as defined above. This yields the following relaxed form of the MP:
\begin{align}
    \begin{split}
        & \! \argmin_{\bm{x} \in \mathcal{X}} \enspace d(\bm{x}^{\mathrm{fac}}, \bm{x}) \\
        & \begin{aligned}
            \subjto \enspace & \sum_{c \in \mathcal{Y}} (2c-1) \left[ \log \rho_c - \sum_{i=1}^{n} \left( \log \sigma_{i\mid c} + \frac{D_i^c(\bm{x} |_{\bm{z}}; \hat{\bm{\delta}})}{2 \sigma_{i\mid c}^2} \right) \right] \geq \tau' \enspace \forall \hat{\bm{\delta}} \in \mathcal{S}, \\
            & D_i^c(\bm{x} |_{\bm{z}}; \hat{\bm{\delta}}) = \sum_{\mathclap{\substack{j \in \paset_i^+ \\ k \in \paset_i^+}}} a_{ij}^c a_{ik}^c z_{jk} + 2 \xi_i^c(\hat{\bm{\delta}}) \sum_{\mathclap{j \in \paset_i^+}} a_{ij}^c x_j + {\xi_i^c(\hat{\bm{\delta}})}^2 \enspace \forall c \in \mathcal{Y}, i \in [n], \hat{\bm{\delta}} \in \mathcal{S}, \\
            & \xi_i^c(\hat{\bm{\delta}}) = \sum_{\mathclap{k \in \paset_i^+}} a_{ik}^c \hat{\delta}_k - b_i^c \enspace \forall c \in \mathcal{Y}, i \in [n], \hat{\bm{\delta}} \in \mathcal{S}, \\
            & \bigvee_{r=1}^{m} \left[ \begin{array}{c}
                \lambda_j^r, \\
                l_j^r \leq x_j \leq l_j^{r + 1}, \\
                \begin{rcases}
                    z_{jk} \geq x_j l_k^1 + l_j^r x_k - l_j^r l_k^1, \\
                    z_{jk} \geq x_j l_k^{m + 1} + l_j^{r + 1} x_k - l_j^{r + 1} l_k^{m + 1}, \\
                    z_{jk} \leq x_j l_k^{m + 1} + l_j^r x_k - l_j^r l_k^{m + 1}, \\
                    z_{jk} \leq x_j l_k^1 + l_j^{r + 1} x_k - l_j^{r + 1} l_k^1
                \end{rcases} \enspace \forall k \in \{k \mid (j, k) \in \mathcal{Q}\}
            \end{array} \right] \enspace \forall j \in [n], \\
            & z_{kj} = z_{jk} \enspace \forall (j, k) \in \mathcal{Q}, j \neq k, \\
            & l_j^r = \underline{x}_j + \frac{r - 1}{m} (\overline{x}_j - \underline{x}_j) \enspace \forall j \in [n], r \in [m + 1], \\
            & \lambda_j^r \in \{0, 1\} \enspace \forall j \in [n], r \in [m], \\
            & \sum_{r=1}^{m} \lambda_j^r = 1 \enspace \forall j \in [n] \eqperiod
        \end{aligned}
    \end{split}
\end{align}
For the AP, the optimization variables are the perturbation terms $\bm{\delta}$. We adopt the same relaxation strategy for all second-order terms $\delta_j \delta_k$, introducing relaxation variables $\eta_{jk} \approx \delta_j \delta_k$, binary indicators $\lambda_j^r$, and bounds $[\underline{\delta}_j, \overline{\delta}_j]$, following the same definitions as in the MP. This yields the following relaxed form of the AP:
\begin{align}
    \begin{split}
        & \! \argmax_{\bm{\delta} \in \mathcal{U}_\gamma} \enspace \tau' - \sum_{c \in \mathcal{Y}} (2c-1) \left[ \log \rho_c - \sum_{i=1}^{n} \left( \log \sigma_{i\mid c} + \frac{D_i^c(\bm{\delta} |_{\bm{\eta}}; \hat{\bm{x}})}{2 \sigma_{i\mid c}^2} \right) \right] \\
        & \begin{aligned}
            \subjto \enspace  & D_i^c(\bm{\delta} |_{\bm{\eta}}; \hat{\bm{x}}) = \sum_{\mathclap{\substack{j \in \paset_i^+ \\ k \in \paset_i^+}}} a_{ij}^c a_{ik}^c \eta_{jk} + 2 \xi_i^c(\hat{\bm{x}}) \sum_{\mathclap{j \in \paset_i^+}} a_{ij}^c \delta_j + {\xi_i^c(\hat{\bm{x}})}^2 \enspace \forall c \in \mathcal{Y}, i \in [n], \\
            & \xi_i^c(\hat{\bm{x}}) = \sum_{\mathclap{k \in \paset_i^+}} a_{ik}^c \hat{x}_k - b_i^c \enspace \forall c \in \mathcal{Y}, i \in [n], \\
            & \bigvee_{r=1}^{m} \left[ \begin{array}{c}
                \lambda_j^r, \\
                l_j^r \leq \delta_j \leq l_j^{r + 1}, \\
                \begin{rcases}
                    \eta_{jk} \geq \delta_j l_k^1 + l_j^r \delta_k - l_j^r l_k^1, \\
                    \eta_{jk} \geq \delta_j l_k^{m + 1} + l_j^{r + 1} \delta_k - l_j^{r + 1} l_k^{m + 1}, \\
                    \eta_{jk} \leq \delta_j l_k^{m + 1} + l_j^r \delta_k - l_j^r l_k^{m + 1}, \\
                    \eta_{jk} \leq \delta_j l_k^1 + l_j^{r + 1} \delta_k - l_j^{r + 1} l_k^1
                \end{rcases} \enspace \forall k \in \{k \mid (j, k) \in \mathcal{Q}\}
            \end{array} \right] \enspace \forall j \in [n], \\
            & \eta_{kj} = \eta_{jk} \enspace \forall (j, k) \in \mathcal{Q}, j \neq k, \\
            & l_j^r = \underline{\delta}_j + \frac{r - 1}{m} (\overline{\delta}_j - \underline{\delta}_j) \enspace \forall j \in [n], r \in [m + 1], \\
            & \lambda_j^r \in \{0, 1\} \enspace \forall j \in [n], r \in [m], \\
            & \sum_{r=1}^{m} \lambda_j^r = 1 \enspace \forall j \in [n] \eqperiod
        \end{aligned}
    \end{split}
\end{align}

Although piecewise McCormick relaxation enables reformulating the original nonlinear problem as a MILP, its accuracy is highly sensitive to the prescribed variable bounds and partition count. Loose bounds or too few partitions enlarge the relaxed feasible region, admitting many solutions that are infeasible in the original problem space. Conversely, excessive partitions inflate the MILP size, leading to a substantial computational burden.

To avoid these issues, we incorporate a \emph{bound tightening} mechanism within the iterative procedure. At the $t$-th iteration of MP, the search domain of each variable is updated to focus around the previous solution $\hat{\bm{x}}^{(t-1)}$, thereby progressively restricting the relaxed region to the most relevant area. Let $[\underline{x}_j^\init, \overline{x}_j^\init]$ be the initial bounds of $x_j$, and let $\nu \in (0, 1)$ denote the contraction factor. The bounds for the $t$-th iteration are computed as
\begin{gather}
    \underline{x}_j^{(t)} =
    \begin{cases}
        \underline{x}_j^\init, & t = 1, \\
        \nu^{t - 1} \cdot \underline{x}_j^\init + (1 - \nu^{t - 1}) \cdot \hat{x}_j^{(t - 1)}, & t > 1
    \end{cases} \eqcomma \\
    \overline{x}_j^{(t)} =
    \begin{cases}
        \overline{x}_j^\init, & t = 1, \\
        \nu^{t - 1} \cdot \overline{x}_j^\init + (1 - \nu^{t - 1}) \cdot \hat{x}_j^{(t - 1)}, & t > 1
    \end{cases} \eqperiod
\end{gather}
In parallel, starting from an initial number of partitions $m^\init$, the partition count for the $t$-th iteration is reduced according to
\begin{equation}
    m^{(t)} = \lceil \nu^{t - 1} \cdot m^\init \rceil \eqperiod
\end{equation}
This strategy allows the relaxed region to progressively align with the original nonlinear structure, while effectively containing the computational cost induced by model growth \cite{deng2021}.

In AP, each variable $\delta_j$ for $j \in [n]$ is bounded by
\begin{equation}
    [\underline{\delta}_j, \overline{\delta}_j] = \left[ \min_{\bm{\delta} \in \mathcal{U}_\gamma} \delta_j, \max_{\bm{\delta} \in \mathcal{U}_\gamma} \delta_j \right] \eqcomma
\end{equation}
which corresponds to the projection of the uncertainty set $\mathcal{U}_\gamma$ onto the $j$-th coordinate. These bounds remain fixed to ensure that the computed maximal violation covers all possible perturbations. Additionally, to preserve consistent approximation quality across iterations, the AP uses a fixed partition count $m^\init$, maintaining stable relaxation accuracy.

From an intuitive standpoint, applying piecewise McCormick relaxations alters the geometry of the feasible region. As a result, even though the original decision function $H$ is Lipschitz continuous and theoretically guarantees the convergence of the cutting-set procedure, this guarantee no longer holds. Nevertheless, bound tightening progressively refines the relaxed model to better resemble the original nonlinear structure. We therefore adopt the same tolerance level $\varepsilon$ from \cref{lemma1} as the termination criterion.

\section{Experiments}
\label{sec:experiments}

\subsection{General Experimental Settings}

All experiments were conducted using Python 3.12.7 (CPython) on a 12th Gen Intel Core i7-12700 processor (20 cores, 20 threads) running Ubuntu 20.04 LTS. Both the piecewise McCormick relaxation-based MILP and the original nonconvex QCQP formulations were solved with Gurobi Optimizer 12.0.1 via the official gurobipy interface \cite{gurobi2024}. The solver was configured to utilize all available threads, with a MIP gap tolerance of 0.01, and all other parameters were left at their default settings.

In our setup, we use the Mahalanobis-$\ell_\infty$ distance, which imposes per-dimension perturbation bounds expressible as linear constraints. The classification threshold is fixed at the standard $\tau = 0.5$, and the tolerance is uniformly set to $\varepsilon = 0.001$.

For the MILP-based approach, the initial bounds $[\underline{x}_j^\mathrm{init}, \overline{x}_j^\mathrm{init}]$ for $j \in [n]$ are determined from the 5th and 95th percentiles of the corresponding feature in the dataset, ensuring that the search space covers the typical data range. The initial number of partitions is set to $m^\mathrm{init} = 20$, with a contraction factor of $\nu = 0.5$.

\subsection{Empirical Evaluation on Benchmark Datasets}

This study evaluates the proposed method on three real-world datasets: Banknote Authentication, Pima Indians Diabetes, and Ionosphere \cite{kelly2025}. To construct CGNCs, we first learn a directed structure among continuous features, while the discrete class node is connected to all features, consistent with the CGNC framework.

Under the NB, all features are assumed conditionally independent given the class, requiring no structure learning. For TAN, each feature is discretized using equal-frequency binning to estimate conditional mutual information under each class. These estimates are aggregated using class priors to construct a global mutual information matrix, from which a maximum spanning tree is derived and oriented into a DAG \cite{cheng1999}. For BAN, we apply the NOTEARS algorithm, which formulates structure learning as a continuous optimization problem under an acyclicity constraint, resulting in a sparse DAG  \cite{zheng2018}.

Given the learned structure, we fit class-conditional linear Gaussian models for each node by regressing on its parents. For nodes without parents, class-specific sample means and variances are used directly. This constructs a fully specified CGNC that captures the dependencies among features and serves as the foundation for counterfactual search and robustness evaluation.

We conducted experiments under two robustness budget values, $\gamma = 0.01$ and $\gamma = 0.05$, respectively. For each configuration, 25 independent runs were performed across different structure types and both formulation types.

\cref{tab:bugdet001} summarizes performance under a robustness budget of $\gamma = 0.01$, reporting runtime and iteration count with standard errors, as well as \emph{early stop} frequency and timeouts. Any run exceeding 3600 seconds is considered a timeout and excluded from the computation of these statistics. NB serves as a conditional independence baseline, enabling assessment of computational and robustness effects of feature dependencies through TAN and BAN.

\begin{table}[ht]
    \centering
    \begin{threeparttable}
        \tiny
        \caption{Results under robustness budget $\gamma = 0.01$}
        \begin{tabular}{ccccccccccc}
            \toprule
            \multicolumn{3}{c}{} & \multicolumn{4}{c}{Piecewise McCormick Relaxation-Based MILP} & \multicolumn{4}{c}{Original Nonconvex QCQP} \\
            \cmidrule(lr){4-7} \cmidrule(lr){8-11}
            \# Nodes & Structure & \# Edges & Runtime (s) & \# Iterations & \# Early Stops & \# Timeouts & Runtime (s) & \# Iterations & \# Early Stops & \# Timeouts \\
            \midrule
            \multicolumn{11}{c}{Banknote Authentication} \\
            \midrule
            \multirow{3}{*}{4} & NB  & 0 & 0.238 (0.013) & 4.52 (0.21) & 21 (98.29\%) & 0 & 0.008 (0.001) & 2.00 (0.00) & 1 ($\approx$100.00\%) & 0 \\
            & TAN & 3 & 0.695 (0.023) & 6.84 (0.09) & 25 (98.86\%) & 0 & 0.015 (0.003) & 2.00 (0.00) & 3 (99.97\%) & 0 \\
            & BAN & 5 & 1.167 (0.046) & 7.28 (0.12) & 25 (99.17\%) & 0 & 0.027 (0.004) & 2.00 (0.00) & 0 (\textendash) & 0 \\
            \midrule
            \multicolumn{11}{c}{Pima Indians Diabetes} \\
            \midrule
            \multirow{4}{*}{8} & NB & 0 & 2.231 (0.304) & 5.76 (0.37) & 21 (98.79\%) & 0 & 0.011 (0.004) & 1.92 (0.06) & 0 (\textendash) & 0 \\
            & TAN & 7 & 62.067 (12.248) & 6.80 (0.12) & 25 (98.38\%) & 0 & 0.021 (0.004) & 2.04 (0.04) & 11 ($\approx$100.00\%) & 0 \\
            & BAN$^\dagger$ & 12 & 297.143 (80.996) & 6.80 (0.18) & 24 (98.52\%) & 0 & 0.019 (0.003) & 1.96 (0.04) & 19 ($\approx$100.00\%) & 0 \\
            & BAN & 18 & 853.243 (154.425) & 7.25 (0.09) & 24 (98.55\%) & 1 & 0.051 (0.005) & 2.00 (0.00) & 16 ($\approx$100.00\%) & 0 \\
            \midrule
            \multicolumn{11}{c}{Ionosphere} \\
            \midrule
            \multirow{3}{*}{32$^\ddagger$} & NB & 0  & \textendash & \textendash & \textendash & \textendash & 0.016 (0.001) & 2.92 (0.06) & 4 (99.98\%) & 0 \\
            & TAN & 31 & \textendash & \textendash & \textendash & \textendash & 5.829 (1.229) & 5.67 (0.42) & 19 (99.96\%) & 1 \\
            & BAN$^\dagger$ & 25 & \textendash & \textendash & \textendash & \textendash & 2.684 (2.024) & 3.87 (0.28) & 20 (99.95\%) & 2 \\
            \bottomrule
        \end{tabular}
        \begin{tablenotes}
            \item[$\dagger$] The in-degree of each node is constrained to at most two parent nodes.
            \item[$\ddagger$] Originally containing 34 features, the first two binary-valued features were excluded from the entire pipeline.
        \end{tablenotes}
        \label{tab:bugdet001}
    \end{threeparttable}
\end{table}

Across datasets, feature dependencies substantially increase the runtime of the MILP-based, especially for denser structures such as full BAN, in contrast to the stable performance of the QCQP-based. On Pima Indians Diabetes, moving from NB to BAN raises runtime from seconds to hundreds of seconds, and on the high-dimensional Ionosphere, the MILP-based becomes infeasible given resource limits, underscoring its high sensitivity to structural complexity.\footnote{The BAN with a maximum in-degree of two is constructed by retaining, for each node, the top two incoming edges with the highest absolute weights from the NOTEARS adjacency matrix. All remaining weaker edges are discarded, ensuring that each node has at most two parents.}

An early stop is recorded when the maximum violation $\varphi$ in the final AP iteration lies in $(0, \varepsilon]$, indicating that the counterfactual is not fully robust over the entire uncertainty set. In such cases, we compute $\tilde{\gamma}/\gamma$, where $\tilde{\gamma} \in [0, \gamma)$ is the largest radius around the solution for which all perturbations flip the prediction. While the MILP-based often terminates early, its solutions still achieve 98--99\% radius-level coverage. The QCQP-based also produces early stops in some cases, with ratios effectively reaching 100\%.

\cref{tab:bugdet005} reports results under $\gamma = 0.05$, which exhibit similar patterns to those at $\gamma = 0.01$. Increasing the robustness budget slightly raises runtime, particularly for the MILP-based, but does not reduce coverage, indicating consistent performance across settings. 

\begin{table}[ht]
    \centering
    \begin{threeparttable}
        \tiny
        \caption{Results under robustness budget $\gamma = 0.05$}
        \begin{tabular}{ccccccccccc}
            \toprule
            \multicolumn{3}{c}{} & \multicolumn{4}{c}{Piecewise McCormick Relaxation-Based MILP} & \multicolumn{4}{c}{Original Nonconvex QCQP} \\
            \cmidrule(lr){4-7} \cmidrule(lr){8-11}
            \# Nodes & Structure & \# Edges & Runtime (s) & \# Iterations & \# Early Stops & \# Timeouts & Runtime (s) & \# Iterations & \# Early Stops & \# Timeouts \\
            \midrule
            \multicolumn{11}{c}{Banknote Authentication} \\
            \midrule
            \multirow{3}{*}{4} & NB  & 0 & 0.273 (0.011) & 4.68 (0.17) & 24 (99.65\%) & 0 & 0.010 (0.003) & 2.00 (0.00) & 0 (\textendash) & 0 \\
            & TAN & 3 & 0.950 (0.028) & 7.00 (0.10) & 25 (99.81\%) & 0 & 0.017 (0.004) & 2.00 (0.00) & 3 (99.85\%) & 0 \\
            & BAN & 5 & 1.635 (0.073) & 7.60 (0.15) & 25 (99.86\%) & 0 & 0.028 (0.003) & 2.00 (0.00) & 0 (\textendash) & 0 \\
            \midrule
            \multicolumn{11}{c}{Pima Indians Diabetes} \\
            \midrule
            \multirow{4}{*}{8} & NB & 0 & 2.233 (0.286) & 5.88 (0.30) & 23 (99.76\%) & 0 & 0.011 (0.002) & 2.00 (0.06) & 0 (\textendash) & 0 \\
            & TAN & 7 & 64.358 (12.327) & 6.80 (0.12) & 25 (99.67\%) & 0 & 0.022 (0.003) & 2.12 (0.07) & 14 ($\approx$100.00\%) & 0 \\
            & BAN$^\dagger$ & 12 & 312.133 (82.927) & 6.84 (0.19) & 24 (99.72\%) & 0 & 0.027 (0.005) & 2.08 (0.06) & 15 (99.98\%) & 0 \\
            & BAN & 18 & 872.299 (153.856) & 7.50 (0.10) & 24 (99.79\%) & 1 & 0.073 (0.016) & 2.00 (0.00) & 5 ($\approx$100.00\%) & 0 \\
            \midrule
            \multicolumn{11}{c}{Ionosphere} \\
            \midrule
            \multirow{3}{*}{32$^\ddagger$} & NB & 0  & \textendash & \textendash & \textendash & \textendash & 0.018 (0.002) & 3.00 (0.00) & 17 ($\approx$100.00\%) & 0 \\
            & TAN & 31 & \textendash & \textendash & \textendash & \textendash & 4.028 (0.818) & 11.39 (0.69) & 22 (99.99\%) & 2 \\
            & BAN$^\dagger$ & 25 & \textendash & \textendash & \textendash & \textendash & 5.217 (3.341) & 6.91 (0.48) & 23 (99.99\%) & 2 \\
            \bottomrule
        \end{tabular}
        \begin{tablenotes}
            \item[$\dagger$] The in-degree of each node is constrained to at most two parent nodes.
            \item[$\ddagger$] Originally containing 34 features, the first two binary-valued features were excluded from the entire pipeline.
        \end{tablenotes}
        \label{tab:bugdet005}
    \end{threeparttable}
\end{table}

Overall, the QCQP-based shows instability on the Ionosphere, with some runs resulting in timeouts. This instability is also reflected in the increased iteration counts and longer runtimes observed on this dataset, underscoring the lack of global guarantees and the risk of local optima when directly solving nonconvex formulations.

Another key finding is that the MILP-based approach requires more iterations due to the bound tightening in its piecewise McCormick relaxation. \cref{fig:iterations} illustrates this behavior on the Banknote Authentication with $\gamma = 0.05$ under a full BAN, based on 25 runs per configuration. Increasing the initial partitions reduces iterations by providing a more accurate approximation, but the enlarged model also increases runtime. In contrast, higher contraction factors slow the shrinkage of the search domain, leading to more iterations and longer runtimes.

\begin{figure}[ht]
    \centering
    \includegraphics[scale=0.36]{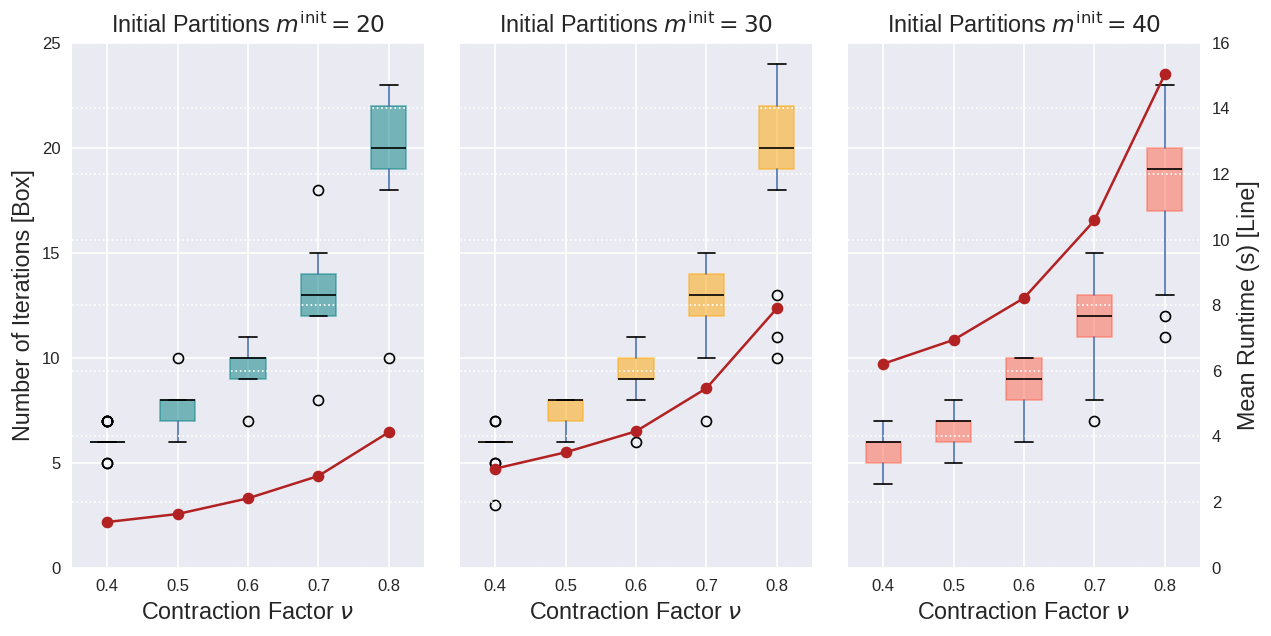}
    \caption{Effect of contraction factor and partitioning on iteration and runtime}
    \label{fig:iterations}
\end{figure}

\cref{fig:runtime_vs_distance} shows the distance of each counterfactual from its factual instance together with the runtime, based on 25 runs on the Pima Indians Diabetes under $\gamma = 0.05$ with a BAN capped at in-degree two. Each dashed line links paired the MILP- and QCQP-based solutions for the same factual instance, indicating that both approaches achieve counterfactuals at similar distances and, in fact, similar locations. The MILP-based runtimes grow exponentially with distance, while the QCQP-based ones remain constant, demonstrating superior scalability.

\begin{figure}[ht]
    \centering
    \includegraphics[scale=0.36]{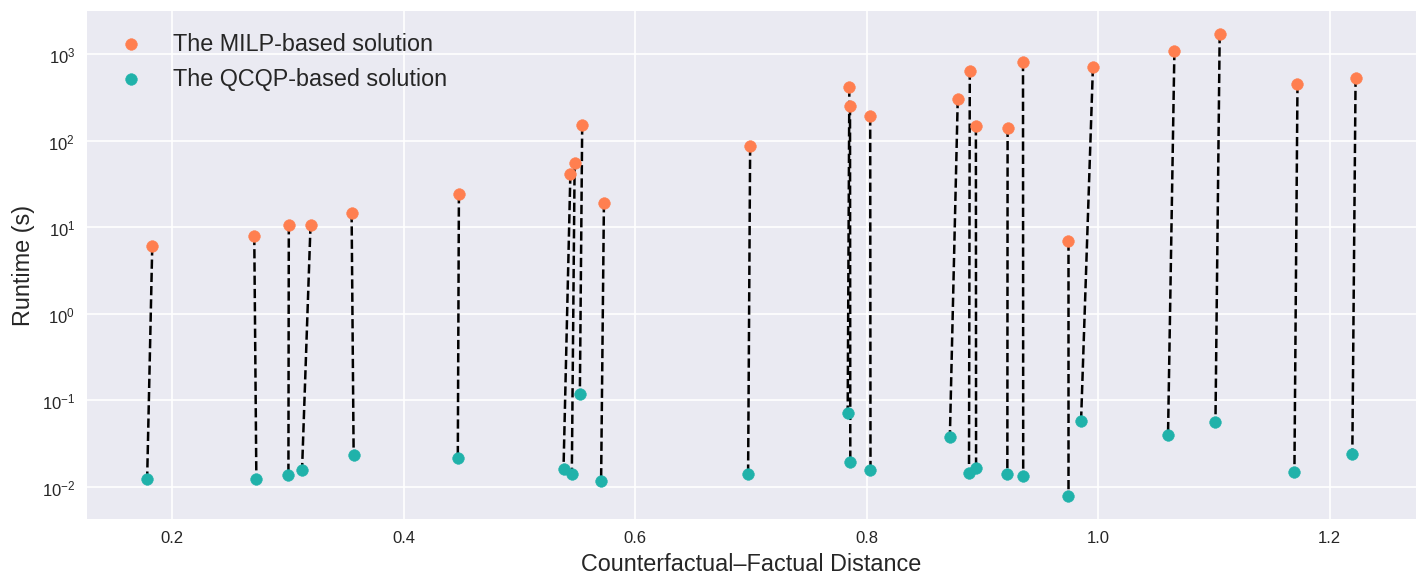}
    \caption{Runtime comparison across counterfactual--factual distances}
    \label{fig:runtime_vs_distance}
\end{figure}

\section{Conclusions and Future Work}
\label{sec:conclusion}

This work introduces a counterfactual search method grounded in CGNCs, embedding conditional feature dependencies directly into the optimization process. The resulting counterfactuals align with model structure and exhibit stronger semantic consistency, while robustness is ensured through an adversarial optimization framework that accounts for implementation uncertainty. The emphasis is on optimization modeling rather than assuming CGNCs fully capture real-world causality.

Our analysis shows that structural complexity fundamentally affects computational efficiency, as the number of quadratic terms grows with feature dependencies. In the MILP-based approach, combinatorial overhead from integer variables inflates problem size and limits scalability, whereas the QCQP-based approach offers more stable runtimes but lacks global guarantees. Since the decision function naturally forms a difference-of-convex (DC) structure, future work could explore DC optimization or hybrid methods to address the underlying theoretical limitations.

As a possible extension, the framework could be applied to mixed discrete--continuous datasets to broaden applicability. In addition, interpretability and plausibility should be assessed from a user perspective, and the causal validity of counterfactuals empirically verified. Addressing these issues will require integrating causal identification methods and human-in-the-loop evaluations to assess practical utility.

Finally, the generative assumptions of CGNCs pose a tension: when class-conditional distributions are well separated, counterfactuals may fall into low-density regions. While these solutions meet classification requirements, their feasibility becomes questionable. Reconciling this trade-off between structural fidelity and semantic plausibility remains a central challenge. This study offers a rigorous yet practical framework and points to future work that combines structural alignment, robustness, and user-centered interpretability.

\appendix
\section{Deviation Representation}
\label{appx:lin}

The coefficients $a_{ij}^c$, defined in \cref{eq:coeff}, correspond to the entries of a matrix $\bm{A}_c = \bm{I} - \bm{W}_c$, where $\bm{W}_c$ encodes the directed linear dependencies among variables under class $c$. Specifically, the matrix $\bm{W}_c$ is defined such that its $(i, j)$-th entry is equal to $w_{ij}^c$ if $X_j$ is a parent of $X_i$, and zero otherwise. When the variables are ordered according to a topological ordering of the underlying DAG, $\bm{W}_c$ becomes strictly lower triangular.

Notably, the matrix $\bm{A}_c$ coincides with the one that appears in the construction of the class-conditional covariance matrix, given by $\bm{\Sigma}_c = \bm{A}_c^{-1} \operatorname{diag}(\sigma_{1\mid c}^2, \sigma_{2\mid c}^2, \dots, \sigma_{n\mid c}^2) {\bm{A}_c^{-\top}}$. More importantly, the $i$-th row vector ${\bm{a}_i^c}^\top$ of $\bm{A}_c$ provides an affine representation of the conditional deviation:
\begin{equation}
    x_i - \hat{\mu}_{i\mid c} = {\bm{a}_i^c}^{\top} \bm{x} - b_i^c \eqcomma
\end{equation}
which serves as the foundation for the modeling of conditional derivations that follow.

\section{Quadratic Structure} 

By examining the form of the decision function $H$ and referring to \cref{eq:prop_logclf}, we observe that, for each class $c$, the only term in the unnormalized log-likelihood $\log h_c(\bm{x})$ that depends on $\bm{x}$ is the sum of squared conditional deviations. All other contributions do not vary with $\bm{x}$ and can therefore be absorbed into a single constant $C_c$. Accordingly, $\log h_c(\bm{x})$ simplifies to
\begin{align}
    \label{eq:exp_log}
    \begin{split}
        \log h_c(\bm{x}) & = \log \prob(Y=c) + \sum_{i=1}^{n} \left( - \frac{1}{2} \log (2 \pi \sigma_{i \mid c}^2) - \frac{{(x_i - \hat{\mu}_{i \mid c})}^2}{2 \sigma_{i\mid c}^2} \right) \\
        & = - \sum_{i=1}^n \frac{{\left({\bm{a}_i^c}^\top \bm{x} - b_i^c \right)}^2}{2\sigma_{i \mid c}^2} + C_c \eqcomma
    \end{split}
\end{align}
where the deviation term is rewritten following the affine form introduced in \cref{appx:lin}.

\section{Quadratic Expansion under Perturbations}
\label{appx:quad_expand}

In the implementation of robust counterfactual search, evaluating the decision function $H$ requires computing the unnormalized joint log-likelihood under perturbation for each class $c$:
\begin{equation}
    \log h_c(\bm{x} + \bm{\delta}) = - \sum_{i=1}^n \frac{{\left({\bm{a}_i^c}^\top (\bm{x} + \bm{\delta}) - b_i^c \right)}^2}{2\sigma_{i\mid c}^2} + C_c \eqperiod
\end{equation}
This implies that the perturbation also alters its conditional mean via changes in its parents. As a result, the perturbed conditional deviation becomes a function of both $\bm{x}$ and $\bm{\delta}$. We then define the squared conditional deviation term:
\begin{equation}
    \begin{split}
        D_i^c(\bm{x}; \bm{\delta}) & \coloneq {\left( {\bm{a}_i^c}^\top (\bm{x} + \bm{\delta}) - b_i^c \right)}^2 \\
        &= {\left( {\bm{a}_i^c}^\top \bm{x} + {\bm{a}_i^c}^\top \bm{\delta} - b_i^c \right)}^2 \eqperiod
    \end{split}
\end{equation}

When solving the MP, we treat $\hat{\bm{\delta}}$ as fixed and optimize over $\bm{x}$. By denoting the constant term $\xi_i^c(\hat{\bm{\delta}}) = {\bm{a}_i^c}^\top \hat{\bm{\delta}} - b_i^c$, in this case, $D_i^c(\bm{x}; \hat{\bm{\delta}})$ can be expanded as a quadratic polynomial in $\bm{x}$:
\begin{align}
    \begin{split}
        D_i^c(\bm{x}; \hat{\bm{\delta}}) &= {\left( {\bm{a}_i^c}^\top \bm{x} + \xi_i^c(\hat{\bm{\delta}}) \right)}^2 \\
        &= \bm{x}^\top \bm{a}_i^c {\bm{a}_i^c}^\top \bm{x} + 2 \xi_i^c(\hat{\bm{\delta}}) \cdot {\bm{a}_i^c}^\top \bm{x} + {\xi_i^c(\hat{\bm{\delta}})}^2 \eqperiod
    \end{split}
\end{align}
Conversely, in the AP, we fix $\hat{\bm{x}}$ and optimize over $\bm{\delta}$. The expansion becomes a quadratic polynomial in $\bm{\delta}$:
\begin{align}
    D_i^c(\bm{\delta}; \hat{\bm{x}}) = \bm{\delta}^\top \bm{a}_i^c {\bm{a}_i^c}^\top \bm{\delta} + 2 \xi_i^c(\hat{\bm{x}}) \cdot {\bm{a}_i^c}^\top \bm{\delta} + {\xi_i^c(\hat{\bm{x}})}^2 \eqperiod
\end{align}

In both formulations, the occurrence of second-order and first-order terms involving the variables is determined by the nonzero structure of $\bm{a}_i^c$. To reduce the complexity of the resulting optimization models, interactions that do not involve node $X_i$ and its parents can be safely ignored, with the relevant indices captured by the set $\paset_i^+$ in \cref{eq:indexset}.

\section{Nonconvexity of the Decision Surface}
\label{appx:quad_nonconvex}

To characterize the curvature of $\log h_c(\bm{x})$, we compute its Hessian with respect to $\bm{x}$:
\begin{align}
    \begin{split}
        \nabla^2 \log h_c(\bm{x}) &= \nabla^2 \left( - \sum_{i=1}^n \frac{{\left({\bm{a}_i^c}^\top \bm{x} - b_i^c \right)}^2}{2\sigma_{i\mid c}^2}  + C_c \right) \\
        &= - \sum_{i=1}^{n} \frac{\bm{a}_i^c {\bm{a}_i^c}^\top}{\sigma_{i\mid c}^2} \preceq 0 \eqcomma
    \end{split}
\end{align}
which is manifestly negative semi-definite, demonstrating that $\log h_c$ is concave in $\bm{x}$.

By contrast, the decision function $H$ combines two concave components with opposite signs. Its Hessian with respect to $\bm{x}$ is
\begin{align}
    \begin{split}
        \nabla^2 H(\bm{x}) & = \nabla^2 \log h_1(\bm{x}) - \nabla^2 \log h_0(\bm{x}) \\
        & = - \sum_{i=1}^{n} \frac{\bm{a}_i^1 {\bm{a}_i^1}^\top}{\sigma_{i\mid 1}^2} + \sum_{i=1}^{n} \frac{\bm{a}_i^0 {\bm{a}_i^0}^\top}{\sigma_{i\mid 0}^2} \eqcomma
    \end{split}
\end{align}
which is the difference of two positive semi-definite matrices. Because such a difference is typically indefinite, $H(\bm{x})$ admits no global convexity or concavity. Accordingly, this makes the counterfactual search a nonconvex optimization problem. An example of such a nonconvex surface, derived from the log-relative likelihood, is illustrated in \cref{fig:nonconvex}.

%\pgfmathdeclarefunction{zdiff}{2}{%
%    \pgfmathsetmacro{\x}{#1}
%    \pgfmathsetmacro{\y}{#2}
%    \pgfmathparse{
%        -0.5*ln(1.35/1.04)
%        -0.5*(1/1.35)*(1.6*\x*\x - 0.6*\x*\y + 0.9*\y*\y)
%        +0.5*(1/1.04)*(0.6*(\x-0.5)^2 -0.4*(\x-0.5)*(\y-0.5) + 1.8*(\y-0.5)^2)
%    }
%}

%\begin{figure}[ht]
%    \centering
%    \begin{tikzpicture}
%        \begin{axis}[
%            view={30}{30},
%            domain=-8:8, domain y=-8:8,
%            samples=6, samples y=6,
%            xlabel=$x_1$, ylabel=$x_2$, zlabel=$H(\bm{x})$,
%            colormap/viridis
%        ] \addplot3[surf] {zdiff(x,y)};
%        \end{axis}
%    \end{tikzpicture}
%    \caption{Surface of the log-relative likelihood $H(\bm{x})$ over two input dimensions}
%    \label{fig:nonconvex}
%\end{figure}

\begin{figure}[ht]
    \centering
    \includegraphics[scale=0.5]{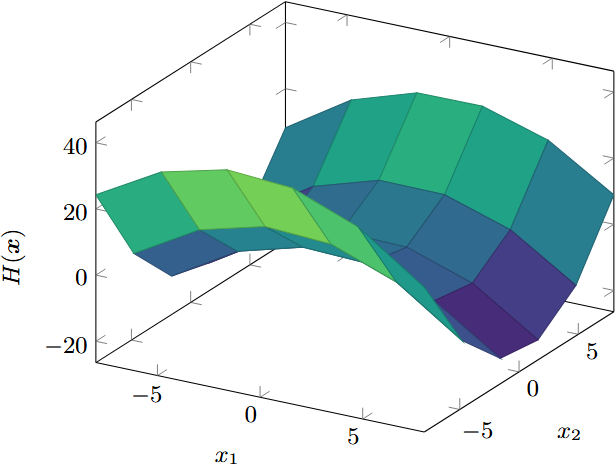}
    \caption{Surface of the log-relative likelihood $H(\bm{x})$ over two input dimensions}
    \label{fig:nonconvex}
\end{figure}

%
% ---- Bibliography ----
%
% BibTeX users should specify bibliography style 'splncs04'.
% References will then be sorted and formatted in the correct style.
%
\bibliographystyle{plain}
\bibliography{robustCE}

@misc{karimi2021b,
      author = {Amir-Hossein Karimi and Gilles Barthe and Bernhard Sch{\"o}lkopf and Isabel Valera},
      title = {A Survey of Algorithmic Recourse: Definitions, Formulations, Solutions, and Prospects},
      year = {2021},
      eprint = {2010.04050},
      archivePrefix = {arXiv},
      primaryClass = {cs.LG},
      url = {https://arxiv.org/abs/2010.04050}, 
}

@misc{mishra2023,
      author = {Saumitra Mishra and Sanghamitra Dutta and Jason Long and Daniele Magazzeni},
      title = {A Survey on the Robustness of Feature Importance and Counterfactual Explanations},
      year = {2023},
      eprint = {2111.00358},
      archivePrefix = {arXiv},
      primaryClass = {cs.LG},
      url = {https://arxiv.org/abs/2111.00358}, 
}

@inproceedings{lundberg2017,
               author = {Lundberg, Scott M. and Lee, Su-In},
               title = {A Unified Approach to Interpreting Model Predictions},
               year = {2017},
               publisher = {Curran Associates, Inc.},
               booktitle = {Proceedings of the 31st International Conference on Neural Information Processing Systems},
               pages = {4768--4777}
}

@inproceedings{ustun2019,
               author = {Ustun, Berk and Spangher, Alexander and Liu, Yang},
               title = {Actionable Recourse in Linear Classification},
               year = {2019},
               publisher = {Association for Computing Machinery},
               url = {https://doi.org/10.1145/3287560.3287566},
               booktitle = {Proceedings of the Conference on Fairness, Accountability, and Transparency},
               pages = {10--19}
}

@inproceedings{karimi2021a,
               author = {Karimi, Amir-Hossein and Sch{\"o}lkopf, Bernhard and Valera, Isabel},
               title = {Algorithmic Recourse: From Counterfactual Explanations to Interventions},
               year = {2021},
               publisher = {Association for Computing Machinery},
               url = {https://doi.org/10.1145/3442188.3445899},
               booktitle = {Proceedings of the 2021 ACM Conference on Fairness, Accountability, and Transparency},
               pages = {353--362}
}

@inproceedings{friedman1998,
               author = {Friedman, Nir and Goldszmidt, Mois{\'e}s and Lee, Thomas J.},
               title = {Bayesian Network Classification with Continuous Attributes: Getting the Best of Both Discretization and Parametric Fitting},
               year = {1998},
               publisher = {Morgan Kaufmann Publishers, Inc.},
               booktitle = {Proceedings of the 15th International Conference on Machine Learning},
               pages = {179--187}
}

@article{friedman1997,
         author = {Friedman, Nir and Geiger, Dan and Goldszmidt, Moises},
         title = {Bayesian Network Classifiers},
         year = {1997},
         publisher = {Kluwer Academic Publishers},
         volume = {29},
         url = {https://doi.org/10.1023/A:1007465528199},
         journal = {Machine Learning},
         pages = {131--163}
}

@article{pearl2009,
         author = {Judea Pearl},
         title = {Causal Inference in Statistics: An Overview},
         volume = {3},
         journal = {Statistics Surveys},
         pages = {96--146},
         year = {2009},
         url = {https://doi.org/10.1214/09-SS057}
}

@inproceedings{cheng1999,
               author = {Cheng, Jie and Greiner, Russell},
               title = {Comparing Bayesian Network Classifiers},
               year = {1999},
               publisher = {Morgan Kaufmann Publishers, Inc.},
               booktitle = {Proceedings of the 15th Conference on Uncertainty in Artificial Intelligence},
               pages = {101--108}
}

@article{mccormick1976,
         author = {McCormick, Garth P.},
         title = {Computability of Global Solutions to Factorable Nonconvex Programs: Part {I} -- Convex Underestimating Problems},
         year = {1976},
         volume = {10},
         number = {1},
         url = {https://doi.org/10.1007/BF01580665},
         journal = {Mathematical Programming},
         pages = {147--175}
}

@article{guidotti2022,
         author={Guidotti, Riccardo},
         title={Counterfactual Explanations and How to Find Them: Literature Review and Benchmarking},
         journal={Data Mining and Knowledge Discovery},
         volume={38},
         number={5},
         pages={2770--2824},
         year={2022},
         publisher={Kluwer Academic Publishers},
         url = {https://doi.org/10.1007/s10618-022-00831-6}
}

@inproceedings{slack2021,
               author = {Dylan Slack and Anna Hilgard and Himabindu Lakkaraju and Sameer Singh},
               title = {Counterfactual Explanations can be Manipulated},
               booktitle = {Advances in Neural Information Processing Systems 34 (NeurIPS 2021)},
               pages = {62--75},
               volume = {34},
               year = {2021},
               publisher = {Curran Associates, Inc.}
}

@article{wachter2018,
         author = {Sandra Wachter and Brent Mittelstadt and Chris Russell},
         title = {Counterfactual Explanations without Opening the Black Box: Automated Decisions and the {GDPR}},
         journal = {Harvard Journal of Law \& Technology},
         volume = {31},
         number = {2},
         pages = {841--887},
         year = {2018}
}

@article{mutapcic2009,
         author = {Mutapcic, Almir and Boyd, Stephen},
         title = {Cutting-set Methods for Robust Convex Optimization with Pessimizing Oracles},
         year = {2009},
         volume = {24},
         number = {3},
         url = {https://doi.org/10.1080/10556780802712889},
         journal = {Optimization Methods and Software},
         pages = {381--406}
}

@inproceedings{zheng2018,
               author = {Zheng, Xun and Aragam, Bryon and Ravikumar, Pradeep and Xing, Eric P.},
               title = {{DAG}s with {NO TEARS}: Continuous Optimization for Structure Learning},
               year = {2018},
               publisher = {Curran Associates, Inc.},
               booktitle = {Proceedings of the 32nd International Conference on Neural Information Processing Systems},
               pages = {9492--9503}
}

@inproceedings{kanamori2020,
               title = {{DACE}: Distribution-Aware Counterfactual Explanation by Mixed-Integer Linear Optimization},
               author = {Kanamori, Kentaro and Takagi, Takuya and Kobayashi, Ken and Arimura, Hiroki},
               booktitle = {Proceedings of the 29th International Joint Conference on Artificial Intelligence},
               publisher = {International Joint Conferences on Artificial Intelligence Organization},
               pages = {2855--2862},
               year = {2020},
               url = {https://doi.org/10.24963/ijcai.2020/395}
}

@inproceedings{russell2019,
               author = {Russell, Chris},
               title = {Efficient Search for Diverse Coherent Explanations},
               year = {2019},
               publisher = {Association for Computing Machinery},
               url = {https://doi.org/10.1145/3287560.3287569},
               booktitle = {Proceedings of the Conference on Fairness, Accountability, and Transparency},
               pages = {20--28}
}

@inproceedings{john1995,
               author = {John, George H. and Langley, Pat},
               title = {Estimating Continuous Distributions in Bayesian Classifiers},
               year = {1995},
               publisher = {Morgan Kaufmann Publishers, Inc.},
               booktitle = {Proceedings of the 11th Conference on Uncertainty in Artificial Intelligence},
               pages = {338--345}
}

@article{saeed2023,
         author = {Waddah Saeed and Christian Omlin},         
         title = {Explainable {AI} ({XAI}): A systematic meta-survey of current challenges and future opportunities},
         journal = {Knowledge-Based Systems},
         volume = {263},
         pages = {110273},
         year = {2023},
         url = {https://doi.org/10.1016/j.knosys.2023.110273}
}

@article{barredoarrieta2020,
         title = {Explainable Artificial Intelligence ({XAI}): Concepts, Taxonomies, Opportunities and Challenges Toward Responsible {AI}},
         journal = {Information Fusion},
         volume = {58},
         pages = {82--115},
         year = {2020},
         url = {https://doi.org/10.1016/j.inffus.2019.12.012},
         author = {Alejandro {Barredo Arrieta} and Natalia D{\'i}az-Rodr{\'i}guez and Javier {Del Ser} and Adrien Bennetot and Siham Tabik and Alberto Barbado and Salvador Garcia and Sergio Gil-Lopez and Daniel Molina and Richard Benjamins and Raja Chatila and Francisco Herrera}
}

@article{vandervelden2022,
         author = {van der Velden, Bas H. M. and Kuijf, Hugo J. and Gilhuijs, Kenneth G. A. and Viergever, Max A.},
         title = {Explainable Artificial Intelligence ({XAI}) in Deep Learning-Based Medical Image Analysis},
         journal = {Medical Image Analysis},
         volume = {79},
         pages = {102470},
         year = {2022},
         url = {https://doi.org/10.1016/j.media.2022.102470}  
}

@inproceedings{pawelczyk2022,
               title = {Exploring Counterfactual Explanations Through the Lens of Adversarial Examples: A Theoretical and Empirical Analysis},
               author = {Pawelczyk, Martin and Agarwal, Chirag and Joshi, Shalmali and Upadhyay, Sohini and Lakkaraju, Himabindu},
               booktitle = {Proceedings of the 25th International Conference on Artificial Intelligence and Statistics},
               pages = {4574--4594},
               year = {2022},
               volume = {151},
               publisher = {PMLR},
               url = {https://proceedings.mlr.press/v151/pawelczyk22a.html}
}

@article{cerneviciene2024,
         author = {Jurgita {\v{C}}ernevi{\v{c}}ien{\.e} and Audrius Kaba{\v{s}}inskas},
         title = {Explainable Artificial Intelligence ({XAI}) in Finance: A Systematic Literature Review},
         journal = {Artificial Intelligence Review},
         volume = {57},
         number = {8},
         pages = {1--45},
         year = {2024},
         url = {https://doi.org/10.1007/s10462-024-10854-8}
}

@article{maragno2024,
         author = {Maragno, Donato and Kurtz, Jannis and R{\"o}ber, Tabea E. and Goedhart, Rob and Birbil, {\c{S}}. {\.I}lker and den Hertog, Dick},
         title = {Finding Regions of Counterfactual Explanations via Robust Optimization},
         journal = {INFORMS Journal on Computing},
         volume = {36},
         number = {5},
         pages = {1316--1334},
         year = {2024},
         url = {https://doi.org/10.1287/ijoc.2023.0153}
}

@article{karuppiah2006,
         title = {Global Optimization for the Synthesis of Integrated Water Systems in Chemical Processes},
         journal = {Computers \& Chemical Engineering},
         volume = {30},
         number = {4},
         pages = {650--673},
         year = {2006},
         url = {https://doi.org/10.1016/j.compchemeng.2005.11.005},
         author = {Ramkumar Karuppiah and Ignacio E. Grossmann}
}

@manual{gurobi2024,
        organization = {Gurobi Optimization, LLC},
        title = {Gurobi Optimizer Reference Manual},
        year = {2024},
        url  = {https://www.gurobi.com},
}

@article{ghorbani2019,
         author = {Amirata Ghorbani and Abubakar Abid and James Zou},
         title = {Interpretation of Neural Networks is Fragile},
         journal = {Proceedings of the AAAI Conference on Artificial Intelligence},
         volume = {33},
         number = {1},
         pages = {3681--3688},
         year = {2019},
         url = {https://doi.org/10.1609/aaai.v33i01.33013681}
}

@misc{laugel2019,
      title = {Issues with Post-Hoc Counterfactual Explanations: A Discussion}, 
      author = {Thibault Laugel and Marie-Jeanne Lesot and Christophe Marsala and Marcin Detyniecki},
      year = {2019},
      eprint = {1906.04774},
      archivePrefix = {arXiv},
      primaryClass = {cs.LG},
      url = {https://arxiv.org/abs/1906.04774}, 
}

@book{weaver2018,
      author = {Weaver, Nik},
      title = {Lipschitz Algebras},
      edition = {2nd},
      publisher = {World Scientific},
      year = {2018},
      url = {https://doi.org/10.1142/9911},
      pages = {1--458}
}

@inproceedings{honorio2011,
               author = {Honorio, Jean},
               title = {Lipschitz Parametrization of Probabilistic Graphical Models},
               year = {2011},
               publisher = {AUAI Press},
               booktitle = {Proceedings of the Twenty-Seventh Conference on Uncertainty in Artificial Intelligence},
               pages = {347--354}
}

@article{bergamini2005,
         title = {Logic-Based Outer Approximation for Globally Optimal Synthesis of Process Networks},
         journal = {Computers \& Chemical Engineering},
         volume = {29},
         number = {9},
         pages = {1914--1933},
         year = {2005},
         url = {https://doi.org/10.1016/j.compchemeng.2005.04.003},
         author = {Maria Lorena Bergamini and Pio Aguirre and Ignacio Grossmann}
}

@book{murphy2012,
      title={Machine Learning: A Probabilistic Perspective},
      author={Murphy, Kevin P},
      year={2012},
      publisher={The MIT press}
}

@article{kulis2013,
        author = {Brian Kulis},
        title = {Metric Learning: A Survey},
        journal = {Foundations and Trends in Machine Learning},
        volume = {5},
        number = {4},
        pages = {287--364},
        year = {2013},
        url = {https://doi.org/10.1561/2200000019}
}

@misc{justin2025,
      title = {Responsible Machine Learning via Mixed-Integer Optimization}, 
      author = {Nathan Justin and Qingshi Sun and Andr{\'e}s G{\'o}mez and Phebe Vayanos},
      year = {2025},
      eprint = {2505.05857},
      archivePrefix = {arXiv},
      primaryClass = {cs.LG},
      url = {https://arxiv.org/abs/2505.05857}, 
}

@inproceedings{karimi2020,
               author = {Karimi, Amir-Hossein and Barthe, Gilles and Balle, Borja and Valera, Isabel},
               title = {Model-Agnostic Counterfactual Explanations for Consequential Decisions},
               booktitle = {Proceedings of the 23rd International Conference on Artificial Intelligence and Statistics (AISTATS 2020)},
               pages = {895--905},
               year = {2020},
               volume = {108},
               publisher = {PMLR},
               url = {https://proceedings.mlr.press/v108/karimi20a.html}
}

@article{kanamori2021,
         author= {Kanamori, Kentaro and Takagi, Takuya and Kobayashi, Ken and Ike, Yuichi and Uemura, Kento and Arimura, Hiroki},
         title = {Ordered Counterfactual Explanation by Mixed-Integer Linear Optimization},
         journal = {Proceedings of the AAAI Conference on Artificial Intelligence},
         volume = {35},
         number = {13},
         pages = {11564--11574},
         year = {2021},
         url = {https://doi.org/10.1609/aaai.v35i13.17376}
}

@inproceedings{dominguezolmedo2022,
               title = {On the Adversarial Robustness of Causal Algorithmic Recourse},
               author = {Dominguez-Olmedo, Ricardo and Karimi, Amir-Hossein and Sch{\"o}lkopf, Bernhard},
               booktitle = {Proceedings of the 39th International Conference on Machine Learning},
               pages = {5324--5342},
               year = {2022},
               volume = {162},
               publisher = {PMLR},
               url = {https://proceedings.mlr.press/v162/dominguez-olmedo22a.html}
}

@article{virgolin2023,
         title = {On the Robustness of Sparse Counterfactual Explanations to Adverse Perturbations},
         journal = {Artificial Intelligence},
         volume = {316},
         pages = {103840},
         year = {2023},
         url = {https://doi.org/10.1016/j.artint.2022.103840},
         author = {Marco Virgolin and Saverio Fracaros}
}

@article{deng2021,
         title = {Optimal Operation of Integrated Heat and Electricity Systems: A Tightening {M}c{C}ormick Approach},
         author = {Lirong Deng and Hongbin Sun and Baoju Li and Yong Sun and Tianshu Yang and Xuan Zhang},
         journal = {Engineering},
         volume = {7},
         number = {8},
         pages = {1076--1086},
         year = {2021},
         url = {https://doi.org/10.1016/j.eng.2021.06.006}
}

@article{kessy2018,
         title = {Optimal Whitening and Decorrelation},
         volume = {72},
         url = {http://doi.org/10.1080/00031305.2016.1277159},
         number = {4},
         journal = {The American Statistician},
         publisher = {Informa UK Limited},
         author = {Kessy, Agnan and Lewin, Alex and Strimmer, Korbinian},
         year = {2018},
         pages = {309--314}
}

@article{hasan2010,
         title = {Piecewise Linear Relaxation of Bilinear Programs Using Bivariate Partitioning},
         author = {Hasan, M. M. Faruque and Karimi, Iftekhar A},
         journal = {AIChE journal},
         volume = {56},
         number = {7},
         pages = {1880--1893},
         year = {2010},
         url = {https://doi.org/10.1002/aic.12109}
}

@misc{mahajan2020,
      title = {Preserving Causal Constraints in Counterfactual Explanations for Machine Learning Classifiers}, 
      author = {Divyat Mahajan and Chenhao Tan and Amit Sharma},
      year = {2020},
      eprint = {1912.03277},
      archivePrefix = {arXiv},
      primaryClass = {cs.LG},
      url = {https://arxiv.org/abs/1912.03277}, 
}

@book{koller2009,
      author = {Koller, Daphne and Friedman, Nir},
      title = {Probabilistic Graphical Models: Principles and Techniques},
      year = {2009},
      publisher = {The MIT Press}
}

@inproceedings{albini2020,
               title = {Relation-Based Counterfactual Explanations for Bayesian Network Classifiers},
               author = {Albini, Emanuele and Rago, Antonio and Baroni, Pietro and Toni, Francesca},
               booktitle = {Proceedings of the Twenty-Ninth International Joint Conference on Artificial Intelligence},
               publisher = {International Joint Conferences on Artificial Intelligence Organization},
               pages = {451--457},
               year = {2020},
               url = {https://doi.org/10.24963/ijcai.2020/63},
}

@book{bental2009,
      author = {Aharon Ben-Tal and Laurent El Ghaoui and Arkadi Nemirovski},
      year = {2009},
      title = {Robust Optimization},
      publisher = {Princeton University Press},
      url = {https://doi.org/10.1515/9781400831050}
}

@article{perez2006,
         title = {Supervised Classification with Conditional Gaussian Networks: Increasing the Structure Complexity from Naive Bayes},
         journal = {International Journal of Approximate Reasoning},
         volume = {43},
         number = {1},
         pages = {1--25},
         year = {2006},
         url = {https://doi.org/10.1016/j.ijar.2006.01.002},
         author = {Aritz P{\'e}rez and Pedro Larra{\~n}aga and I{\~n}aki Inza}
}

@article{deeks2019,
         url = {https://www.jstor.org/stable/26810851},
         author = {Ashley Deeks},
         journal = {Columbia Law Review},
         number = {7},
         pages = {1829--1850},
         publisher = {Columbia Law Review Association, Inc.},
         title = {The Judicial Demand for Explainable Artificial Intelligence},
         volume = {119},
         year = {2019}
}

@article{demaesschalck2000,
         title = {The {M}ahalanobis Distance},
         journal = {Chemometrics and Intelligent Laboratory Systems},
         volume = {50},
         number = {1},
         pages = {1--18},
         year = {2000},
         url = {https://doi.org/10.1016/S0169-7439(99)00047-7},
         author = {R. {De Maesschalck} and D. Jouan-Rimbaud and D. L. Massart}
}

@misc{kelly2025,
      author = {Markelle Kelly and Rachel Longjohn and Kolby Nottingham},
      title = {The {UCI} Machine Learning Repository},
      year = {2025},
      url = {https://archive.ics.uci.edu}
}

@inproceedings{ribeiro2016,
               author = {Ribeiro, Marco Tulio and Singh, Sameer and Guestrin, Carlos},
               title = {`{W}hy Should {I} Trust You?': Explaining the Predictions of Any Classifier},
               year = {2016},
               publisher = {Association for Computing Machinery},
               url = {https://doi.org/10.1145/2939672.2939778},
               booktitle = {Proceedings of the 22nd ACM SIGKDD International Conference on Knowledge Discovery and Data Mining},
               pages = {1135--1144}
}

\end{document}